\documentclass[twoside]{article}

\usepackage[preprint]{aistats2027}
\usepackage[round]{natbib}

\usepackage{amsmath, amssymb, amsthm}
\usepackage{bm, dsfont}
\usepackage{mathtools}
\usepackage{hyperref}
\usepackage{xcolor}
\usepackage{tcolorbox}
\usepackage{caption}
\usepackage[labelformat=simple]{subcaption}
\usepackage{siunitx}
\usepackage[group-minimum-digits = 4, group-separator = {,}]{siunitx}
\usepackage[ruled]{algorithm2e}
\usepackage{xspace}
\usepackage{booktabs, threeparttable}

\mathtoolsset{showonlyrefs, showmanualtags}
\DeclareCaptionType{proc}[Procedure]

\usepackage[columnwise, switch]{lineno}
\theoremstyle{plain}
\newtheorem{theorem}{Theorem}[section]

\newtheorem{lemma}[theorem]{Lemma}

\theoremstyle{definition}

\theoremstyle{remark}

\newcommand*{\tr}{\mathrm{tr}}
\newcommand*{\const}{\mathrm{const}.}
\newcommand{\argmin}{\mathop{\rm arg~min}\limits}

\newcommand*{\cA}{\mathcal{A}}
\newcommand*{\cJ}{\mathcal{J}}
\newcommand*{\cY}{\mathcal{Y}}

\newcommand{\ttM}{\texttt{MovieLens}\xspace}
\newcommand{\ttI}{\texttt{Instacart}\xspace}

\begin{document}
% If your paper is accepted and the title of your paper is very long,
% the style will print as headings an error message. Use the following
% command to supply a shorter title of your paper so that it can be
% used as headings.
%
%\runningtitle{I use this title instead because the last one was very long}

% If your paper is accepted and the number of authors is large, the
% style will print as headings an error message. Use the following
% command to supply a shorter version of the author names so that
% they can be used as headings (for example, use only the surnames)
%
%\runningauthor{Surname 1, Surname 2, Surname 3, ...., Surname n}

\twocolumn[

\aistatstitle{Scalable Learning of Determinantal Point Processes\\ with Kernel-Structured Interactions}

\aistatsauthor{ Takahiro Kawashima \And Hideitsu Hino }

\aistatsaddress{ ZOZO Research \And Institute of Statistical Mathematics \\ Waseda University} ]

\begin{abstract}
    Determinantal point processes (DPPs) on a finite set are probability models that exhibit repulsion where inter-item similarities are encoded by a positive definite kernel matrix.
    Scaling DPP learning to large datasets typically requires imposing structural constraints on the kernel matrix, such as a low-rank-plus-diagonal form.
    However, structural assumptions introduce a severe trade-off between expressive power and computational scalability. Our study addresses this challenge by (a) modeling the full-rank interactions via a nonlinear kernel function and latent features, (b) designing a surrogate function for the maximum likelihood estimator, and (c) adopting stochastic optimization using random features of the nonlinear kernel.
    Numerical experiments on real-world datasets demonstrate that our method significantly improves this trade-off, achieving better generalization performance with remarkable parameter efficiency.
\end{abstract}

\section{Introduction}
Diversity has emerged as a fundamental concept across various domains of machine learning.
For instance, on Spotify, the diversity of a user's consumption is positively associated with downstream KPIs such as conversion and retention \citep{anderson2020}.
Furthermore, diversifying pre-training data has likewise been shown to improve downstream performance in LLMs \citep{tirumala2023, chen2024}.

Determinantal Point Processes (DPPs) are elegant probabilistic models that naturally capture such diversity.
Originally introduced in statistical mechanics \citep{macchi1975}, their mathematical foundations as general statistical models were established by \citet{soshnikov2000} and \citet{shirai2000}.
Following the comprehensive review by \citet{kulesza2012} that bridged DPPs with the machine learning community, their practical applications have expanded remarkably.
Today, DPPs are widely utilized in recommender, search, retrieval systems \citep{wilhelm2018, warlop2019, kulesza2011, sogi2025}, stochastic optimization \citep{zhang2017a, zhang2019a}, and black-box optimization \citep{nava2022}.
This surge in practical applications has, in turn, renewed theoretical interest, motivating active research from novel perspectives such as geometric analysis \citep{brunel2017, hino2024a, hino2026, anzai2026}.

Parameter learning is a crucial challenge for expanding the applicability of DPPs.
Since DPPs are defined through a determinant and a positive (semi-)definite kernel matrix, learning their parameters without any structural assumptions requires cubic computational complexity with respect to the problem size.
This complexity hinders the applicability of DPPs in the big data era.
To address this scaling issue, previous works have introduced specific structures for the kernel matrix, such as Kronecker-decomposable \citep{mariet2016} and low-rank models \citep{gartrell2017}.
Low-rank models are often augmented with a positive diagonal term for numerical stability and support for large subsets.
Although the resulting low-rank-plus-diagonal (LR+D) kernel is full rank, its off-diagonal interactions are still governed by the rank-$r$ component.
These methods, however, suffer from a trade-off between computational cost and expressive power.

To improve this trade-off, we introduce Kernel-structured DPPs (KS-DPPs) and develop their scalable learning algorithm.
In KS-DPPs, we parameterize the kernel matrix using $(d-1)$-dimensional latent features and a nonlinear kernel function, resulting in $d$ learnable parameters per item.
Unlike LR+D methods, which restrict the off-diagonal interaction component to rank $r$,
our nonlinear parameterization maintains a full-rank interaction structure.
With an appropriate choice of the kernel function, this approach yields high expressive power with a markedly small number of parameters.
Moreover, we design a local surrogate objective inspired by the minorization-maximization (MM) algorithm.
This surrogate admits stochastic optimization via the random features of the kernel, yielding a scalable learning algorithm for KS-DPPs that has linear per-iteration time complexity.

Our main contributions are summarized as follows:
\begin{itemize}
    \item \textbf{Kernel-Structured DPPs:} We formulate large-scale DPP learning through item-wise latent features and qualities under a nonlinear kernel. This formulation can represent full-rank interactions without an explicit rank-$r$ factorization, using only $Nd$ learnable parameters.
    \item \textbf{Scalable Learning and Approximation Analysis:} We develop a scalable learning algorithm with linear per-iteration time complexity by designing a local quadratic surrogate objective and leveraging a quality-weighted Nystr\"om approximation. Furthermore, we establish a theoretical upper bound on the approximation bias.
    \item \textbf{Empirical Validation of Trade-off Improvement:} Empirical evaluations on real-world datasets demonstrate that KS-DPPs achieve superior generalization performance with remarkable parameter efficiency, significantly improving the trade-off between expressive power and computational cost.
\end{itemize}

\subsection*{Related Work}
The learning problem of DPPs was first considered by \citet{kulesza2011a}, although the learnable parameters were limited to the item qualities.
\citet{affandi2014} considered the same nonlinear kernel form as KS-DPPs and developed Bayesian inference for its kernel parameters.
However, their formulation treats item features as observed inputs and infers shared kernel hyperparameters, rather than learning item-specific latent variables.
In contrast, several full-kernel learning algorithms have been developed \citep{gillenwater2014, mariet2015, kawashima2023, castella2026},
and the statistical properties of the maximum likelihood estimator have also been studied \citep{brunel2017, grigorescu2022, castella2026, anzai2026}.
However, these full-kernel learning algorithms require cubic computational complexity with respect to the problem size and are not scalable to large datasets.

Due to this difficulty, some studies introduced special structures for the kernel matrix.
\citet{gartrell2017} and \citet{osogami2018} developed low-rank factorizations of the kernel matrix, and \citet{mariet2016} proposed a Kronecker-decomposable structure for ground sets formed by a direct product.
\citet{dupuy2018} developed a sublinear time method for learning DPPs but it requires strong assumptions: the feature map $\phi$ and the base distribution $p$ of the feature $x$ must be explicitly known, and the second moment $\sum_x p(x) \phi(x) \phi(x)^\top$ needs to be obtained in closed form.
Related to the low-rank model, \citet{gartrell2019a} proposed Deep DPPs, which use a neural network to map item identities and optional metadata to low-rank embeddings and train this mapping end-to-end by maximizing the DPP likelihood.
Despite the nonlinear mapping, the resulting kernel still retains the low-rank form.

While KS-DPPs share a similar motivation with low-rank factorizations and sublinear time methods, our modeling is based on more flexible structural assumptions.
Because of the nonlinearity of the kernel, KS-DPPs can represent full-rank interactions among items even with low-dimensional latent features.
Furthermore, the kernel function can be chosen to control the inductive bias, such as smoothness, stationarity, and periodicity.

\section{Kernel-Structured DPP}\label{sec:methodology}
\subsection{Determinantal Point Processes}\label{sec:dpp}
Determinantal point processes (DPPs) are probability models over point configurations that exhibit repulsion, originally proposed by \citet{macchi1975} in the context of statistical mechanics.
While DPPs are generally defined on uncountable spaces, their probability functions are explicitly formulated for the finite ground set $\cY \coloneq \{1, \ldots, N\}$.
A DPP on $\cY$ assigns a probability to every subset of $\cY$ by
\begin{align}
    \label{eq:l_ensemble}
    P(\cA; \bm{L}) = \frac{\det \bm{L}[\cA]}{\det(\bm{I} + \bm{L})},
\end{align}
where $\bm{L} \in \mathbb{S}^N_+$ is a kernel parameter and $\bm{L}[\cA]$ denotes the principal submatrix of $\bm{L}$ indexed by $\cA \subseteq \cY$ \citep{borodin2005}.
%If $\mathrm{rank}(\bm{L}) < N$, $P(\cA; \bm{L}) = 0$ for $|\cA| > \mathrm{rank}(\bm{L})$.
When applying the LR+D structure as $\bm{L}=\lambda \bm{I}+\bm{V}\bm{V}^\top$ with $\lambda>0$ and $\bm{V}\in\mathbb R^{N\times r}~ (r<N)$, the overall kernel is full-rank, but the inter-item interactions are governed by the rank-$r$ off-diagonal component.

In this paper, we address the learning problem for the parameters $\bm{\theta}$ of the kernel matrix $\bm{L}_{\bm{\theta}}$ given the observations $\cA_1, \ldots, \cA_M \subseteq \cY$.
Our aim is to maximize the regularized log-likelihood:
\begin{align}
    \label{eq:loss}
    \ell_\lambda (\bm{\theta}) \coloneq \frac{1}{M} \sum^M_{m=1} \log P(\cA_m; \bm{L}_{\bm{\theta}}) - \lambda R (\bm{\theta}),
\end{align}
where $R$ is a regularizer and $\lambda > 0$ is a regularization parameter.

Without imposing structural assumptions on the kernel matrix, evaluating exact log-determinants and their gradients inherently requires $O(N^3)$ operations,
prohibiting the application of full-rank DPPs to large-scale problems.
While existing structural constraints like LR+D alleviate this scaling bottleneck, they lead to a severe trade-off between generalization or expressive power and computational cost.
Our goal is to develop a highly scalable learning framework for DPPs without significant loss of expressive power.

\subsection{Parameterization of KS-DPP}
We propose Kernel-structured DPPs (KS-DPPs) that balance expressive power and learning scalability.
Let $\mathcal{X}$ be an input space and $k: \mathcal{X}\times\mathcal{X} \to \mathbb{R}$ be a positive definite kernel admitting a random feature representation \citep{rahimi2008} with a feature map $\bm{\psi}_{\bm{\omega}}$:
\begin{align}
    \label{eq:feature_map}
    k(\bm{x}, \bm{y}) = \mathbb{E}_{\bm{\omega}}[\bm{\psi}_{\bm{\omega}}(\bm{x})^\top \bm{\psi}_{\bm{\omega}}(\bm{y}) ].
\end{align}
We also assume some regularity conditions:
$\nabla_{\bm{x}} k(\bm{x}, \bm{y}) = \mathbb{E}[\nabla_{\bm{x}}\bm{\psi}_{\bm{\omega}}(\bm{x})^\top \bm{\psi}_{\bm{\omega}}(\bm{y})]$ and
$\mathbb{E}\lVert \bm{\psi}_{\bm{\omega}}(\bm{x}) \rVert^2, \mathbb{E}\lVert \nabla_{\bm{x}} \bm{\psi}_{\bm{\omega}} (\bm{x}) \rVert^2 < \infty$.

Prime examples of $k$ are shift-invariant kernels, whose feature maps can be constructed from random Fourier features (RFFs) \citep{rahimi2007}.
In RFFs, the feature maps are
$\bm{\psi}_{\bm{\omega}}(\bm{x}) = (\cos(\bm{\omega}^\top \bm{x}), \sin(\bm{\omega}^\top \bm{x}))^\top$
where $\bm{\omega} \sim p(\bm{\omega})$ is the random vector following the probability density $p$ that determined by the Fourier transform of the shift-invariant kernel $k(\bm{x}, \bm{y}) = k(\bm{x} - \bm{y}) = k(\bm{r})$.
Other valid choices for the kernel include those with random binning features \citep{rahimi2007, wu2016}, random Maclaurin features \citep{kar2012}, or random kernel features \citep{atarashi2019}.

Given a kernel $k$ that satisfies the above conditions, we define the kernel elements as
\begin{align}
    \label{eq:l_structure}
    L_{\bm{\theta}, ij} \coloneq \varepsilon \delta_{ij} + q_i q_j k(\bm{x}_i, \bm{x}_j),
\end{align}
where $\bm{x}_1, \ldots, \bm{x}_N \in \mathcal{X} \subseteq \mathbb{R}^{d-1}$ are latent item features, $q_1, \ldots, q_N > 0$ are item qualities, and $\varepsilon > 0$ is a small hyperparameter to prevent rank degeneracy.\footnote{If $k$ is shift-noninvariant, it should be normalized as $k(\bm{x}, \bm{y}) / \sqrt{k(\bm{x}, \bm{x}) k(\bm{y}, \bm{y})}$ to avoid indeterminacy between the item qualities and the diagonal values of $k$. We assume $k$ is shift-invariant for simplicity.}
This formulation involves $Nd$ learnable parameters $\bm{\theta} = \{(q_i, \bm{x}_i)\}^N_{i=1}$ that are typically unknown.
In contrast to the settings studied by \citet{affandi2014}, where item features are observed and kernel hyperparameters are inferred, we treat $\bm{x}_1,\ldots,\bm{x}_N$ as item-specific latent variables and learn them jointly with the qualities $q_1,\ldots,q_N$.
In matrix notation, \eqref{eq:l_structure} is expressed as
\begin{align}
    \label{eq:l_structure_mat}
    \bm{L}_{\bm{\theta}} = \varepsilon \bm{I} + \bm{Q} \bm{\Sigma} \bm{Q},
\end{align}
where $\bm{Q} \coloneq \mathrm{diag}(q_1, \ldots, q_N)$ and $\bm{\Sigma} \coloneq (k(\bm{x}_i, \bm{x}_j))^N_{i,j=1}$.
Unlike the ordinary LR+D structure, our parameterization in \eqref{eq:l_structure} induces full-rank interactions derived from the $(d-1)$-dimensional kernel geometry.
Because low-dimensional latent features often underlie complex data, this structure is expected to achieve high parameter efficiency in many applications \citep{bengio2013, zhang2019b}.

\section{Learning KS-DPPs}\label{sec:learning}
\subsection{Surrogate Objective}\label{sec:surrogate}
Our strategy for learning KS-DPPs is to construct a surrogate function of $\ell_\lambda (\bm{\theta})$, whose stochastic gradients can be efficiently obtained using the random features of $k$, and to maximize this surrogate using stochastic optimization.
To this end, we leverage the minorization-maximization (MM) algorithm \citep{hunter2004, sun2017}.
In the MM framework, an objective $f: \theta \mapsto f(\theta)$ is maximized by iteratively optimizing a minorizer $g(\cdot; \theta_t)$ that satisfies $g(\theta; \theta') \leq f(\theta)$ and $g(\theta' ; \theta') = f(\theta')$ for all $\theta, \theta'$.
By updating $\theta_{t+1} \gets \mathrm{argmax}_{\theta} ~g(\theta ; \theta_t)$, monotonic improvement is guaranteed as $f(\theta_{t+1}) \geq g(\theta_{t+1} ; \theta_{t}) \geq g(\theta_t ; \theta_t) = f(\theta_t)$.
This approach simplifies the original maximization problem of $f$ into a sequence of surrogate optimizations.

Following the MM principle, we construct a minorizer for the DPP log-likelihood:
\begin{align}
    \label{eq:loglik}
    \log P(\cA; \bm{L}) = \log \det \bm{L}[\cA] - \log \det(\bm{I} + \bm{L}),
\end{align}
where $\bm{L}=\bm{L}_{\bm{\theta}}$.
Since $\log\det$ is concave on $\mathbb{S}_{++}$, the second term is minorized by the tangent plane at the current iterate $\bm{L}_t \coloneq \bm{L}_{\bm{\theta}_t}$:
\begin{align}
    &-\log\det(\bm{I} + \bm{L})\\
    &\quad\geq -\log\det(\bm{I}+\bm{L}_t) - \tr \{(\bm{I} + \bm{L}_t)^{-1}(\bm{L} - \bm{L}_t)\}\\
    \label{eq:tangent}
    &\quad=- \tr \{(\bm{I} + \bm{L}_t)^{-1}\bm{L} \} + \mathrm{const.},
\end{align}
as utilized in \citet{kawashima2023}.

Conversely, the first term $\log\det\bm{L}[\cA]$ is more challenging to minorize because of its concavity.
One option is to use a global curvature bound; since $\bm{L} \succeq \varepsilon \bm{I} \succ 0$ by the structural assumption \eqref{eq:l_structure_mat},
the gradient $\nabla_{\bm{L}}\log\det\bm{L}[\cA]$ is Lipschitz continuous, which induces a quadratic minorizer on the domain $\bm{L} \succeq \varepsilon\bm{I}$.
This follows from the descent lemma \citep{bertsekas1999}, one of the standard techniques for constructing MM surrogates \citep{sun2017}.
However, because $\varepsilon > 0$ is typically very small, this approach yields an excessively loose and conservative minorizer, resulting in prohibitively slow convergence.

To address this, we consider a local quadratic minorizer for $\log\det\bm{L}[\cA]$.
We define the normalized relative change of $\bm{L}$ from $\bm{L}_{t}$:
\begin{align}
    \bm{E}_{t} \coloneq \bm{L}_t[\cA]^{-\frac{1}{2}} (\bm{L}[\cA] - \bm{L}_t[\cA]) \bm{L}_t[\cA]^{-\frac{1}{2}},
\end{align}
leading to $\log\det\bm{L}[\cA] = \log\det(\bm{I} + \bm{E}_t) + \const$.
We impose a trust region $\lVert \bm{E}_t \rVert_{\mathrm{op}} \leq \rho$ for a given $\rho \in (0, 1)$, where $\lVert \cdot \rVert_{\mathrm{op}}$ is the spectral norm.
This constraint is equivalent to restricting $\bm{L}[\cA]$ within the range $(1-\rho)\bm{L}_t[\cA] \preceq \bm{L}_{\bm{\theta}}[\cA] \preceq (1+\rho)\bm{L}_t[\cA]$.

Let $\eta_j$ be the $j$-th eigenvalue of $\bm{E}_t$.
If there exists a coefficient $\beta_\rho$ such that
$\log(1 + \eta) \geq \eta - \frac{\beta_\rho}{2}\eta^2$
for $\eta \in [-\rho, \rho]$,
then
$\log\det(\bm{I} + \bm{E}_t) = \sum_j \log (1+ \eta_j)\geq \sum_j(\eta_j - \frac{\beta_\rho}{2}\eta^2_j) =  \tr \bm{E}_t - \frac{\beta_\rho}{2} \lVert \bm{E}_t\rVert^2_F$
holds,
where $\beta_\rho$ serves as a local curvature bound around $\bm{L}_t$.
Among all quadratic minorizers that touch $\log(1+\eta)$ at $\eta=0$, the constant and linear coefficients are necessarily $0$ and $1$, respectively.
The tightest $\beta_\rho$ satisfying the inequality above is given by
$\beta_\rho = \frac{-2(\rho + \log (1 - \rho))}{\rho^2}$.

Overall, we minorize the objective \eqref{eq:loss} within the trust region by
\begin{align}
    g_\lambda(\bm{\theta}; \bm{\theta}_t) &= \frac{1}{M}\sum^M_{m=1} \left ( \tr \bm{E}_{t,m} - \frac{\beta_\rho}{2} \lVert \bm{E}_{t,m}\rVert^2_F\right )\\
    \label{eq:surrogate}
    &\quad- \tr\left\{ \left (\bm{I} + \bm{L}_t\right )^{-1}\bm{L}_{\bm{\theta}} \right\} - \lambda R(\bm{\theta})
\end{align}
up to a constant, where
$
    \bm{E}_{t,m} \coloneq \bm{L}_t[\cA_m]^{-\frac{1}{2}} (\bm{L}[\cA_m] - \bm{L}_t[\cA_m]) \bm{L}_t[\cA_m]^{-\frac{1}{2}}.
$
The entire learning framework involves a nested procedure:
maximizing $g_\lambda (\bm{\theta}; \bm{\theta}_t)$ via the inner steps,
and updating $\bm{\theta}_{t+1} \gets \mathrm{argmax}_{\bm{\theta}} ~g_\lambda (\bm{\theta}; \bm{\theta}_t)$ as the $t$-th outer step.

\subsection{Stochastic Gradient and Kernel Approximation}
Computing the exact gradient of the surrogate \eqref{eq:surrogate} requires $O(N^3 + M\kappa^3)$ operations due to matrix inversions and multiplications, where $\kappa \coloneq \max_m |\cA_m|$.
To make the optimization scalable, we develop a linear-time method for evaluating the stochastic gradients.
The precise formulation of this gradient estimator is detailed in Appendix \ref{app:gradient_estimators}.

Observe that $\bm{E}_{t,m}$ is affine with respect to $\bm{L}_{\bm{\theta}}$, because $\bm{L}_t[\mathcal{A}_m]^{-1/2}$ remains fixed throughout the $t$-th outer step. 
Consequently, the local minorizer \eqref{eq:surrogate} is purely quadratic in $\bm{L}_{\bm{\theta}}$, and crucially, it avoids any inversion of the moving iterate $\bm{L}_{\bm{\theta}}$.
Since $k$ admits the feature representation \eqref{eq:feature_map}, both $\bm{L}_{\bm{\theta}}$ and its gradient $\nabla \bm{L}_{\bm{\theta}}$ can be unbiasedly estimated using random features.
Because the gradient of this quadratic surrogate yields the bilinear forms of $\bm{L}_{\bm{\theta}}$ and $\nabla \bm{L}_{\bm{\theta}}$,
constructing an unbiased estimator requires two independent draws $\bm{\omega},\bm{\nu} \sim p$.
The remaining bottleneck for a linear-time Monte Carlo evaluation of $\nabla g_\lambda$ is solving linear systems including a linear system of size $N$ appearing in
\begin{align}
    \label{eq:grad_denom}
\nabla_{\theta_i}\mathrm{tr}\{(\bm I+\bm L_t)^{-1}\bm L_{\bm\theta}\}=\mathrm{tr}\{(\bm I+\bm L_t)^{-1}\nabla_{\theta_i}\bm L_{\bm\theta}\}
\end{align}

To evaluate \eqref{eq:grad_denom} at scale, we employ the quality-weighted Nystr{\"o}m approximation for $\bm{L}_t$, which provides an approximation $\widehat{\bm{L}}_t \approx \bm{L}_t$ with a configurable rank $r_{\mathrm{Nys}}$ \citep{williams2000, drineas2005}.
Note that Nystr\"om approximations have previously been used to construct low-rank surrogates for scalable sampling from a fixed DPP \citep{affandi2013}.
Here, we use the approximation for a different purpose: to cache the frozen coefficient of the normalizer tangent during the inner steps.

Let $\bm{M}_t \coloneq \bm{Q}_t \bm{\Sigma}_t \bm{Q}_t $ and define the sampling weights $p_i = q^2_{t,i} / \lVert \bm{Q}_t \rVert^2_F$ for $i = 1, \ldots, N$.
Our quality-weighted Nystr{\"o}m method first samples $r_{\mathrm{Nys}}$ indices from $\cY$ without replacement according to $p_i$.
This procedure can be realized by the Gumbel top-$k$ trick, for example \citep{kool2019}.
Let $J = \{j_1, \ldots, j_{r_{\mathrm{Nys}}}\} \subseteq \cY$ denote the set of sampled indices.
The method then constructs the low-rank approximation:
\begin{align}
    \label{eq:qw_nystrom}
    \bm{M}_t \approx \widehat{\bm{M}}_t \coloneq (\bm{M}_t)^\top_{J, :}\bm{M}_t[J]^+ (\bm{M}_t)_{J, :},
\end{align}
where $(\bm{M}_t)_{J, :}$ is the submatrix of $\bm{M}_t$ comprising the rows indexed by $J$, and $\bm{M}_t[J]^+$ is the pseudo-inverse of $\bm{M}_t[J]$.
The desired kernel approximation is obtained by $\widehat{\bm{L}}_t = \varepsilon \bm{I} + \widehat{\bm{M}}_t$.

Once the approximation $\widehat{\bm{L}}_t$ is obtained at the beginning of each outer step, the approximate inverse $\widehat{\bm{W}}_t \coloneq (\bm{I} + \widehat{\bm{L}}_t)^{-1}$ can be cached and reused during the subsequent inner steps.
Although the approximation violates the touching condition $g(\theta' ; \theta') = f(\theta')$ of the standard MM principle, $g$ still serves as a valid non-touching minorizer of $f$ because the inequality \eqref{eq:tangent} holds for any choice of $\bm{L}_t$.
A key advantage of using the Nystr{\"o}m method here is that this non-touching bias remains theoretically controllable.
We analyze this approximation gap in Section \ref{sec:gap}.

\subsection{Hybrid Linear Solves}
Evaluating the observation-dependent terms also poses difficulties.
By letting $\bm{D}^t_{\bm{\theta}} \coloneq \bm{L}_{\bm{\theta}}-\bm{L}_t$, we have
$\tr\bm{E}_{t,m} = \tr(\bm{L}_t[\cA_m]^{-1}\bm{D}^t_{\bm{\theta}} [\cA_m] )$
and
$\|\bm{E}_{t,m}\|_F^2 = \tr(\bm{L}_t[\cA_m]^{-1}\bm{D}^t_{\bm{\theta}} [\cA_m]\bm{L}_t[\cA_m]^{-1}\bm{D}^t_{\bm{\theta}} [\cA_m])$
by the cyclic property of the trace, thereby avoiding matrix square roots.
Since their gradients involve applying $\bm{L}_t[\cA_m]^{-1}$ to $\bm{D}^t_{\bm{\theta}}$ or $\nabla_{\theta_i} \bm{D}^t_{\bm{\theta}}$,
we need to evaluate
\begin{align}
    \label{eq:unified_linear_system}
    \!\!\!\!
    \bm{\xi}^t_{\boldsymbol{\theta}} \coloneq \bm{L}_t[\cA]^{-1} \bm{\varphi}^t_{\boldsymbol{\theta}}
    = (\varepsilon \bm{I} + \bm{Q}_t [\cA] \bm{\Sigma}_t [\cA] \bm{Q}_t[\cA])^{-1}\bm{\varphi}^t_{\boldsymbol{\theta}}
\end{align}
for each observation $\cA$ in the minibatch,
where $\bm{\varphi}^t_{\boldsymbol{\theta}}$ is a column of $\bm{D}^t_{\boldsymbol{\theta}}$ or its gradient.
In practice, we do not have access to the exact forms of $\bm{D}^t_{\bm{\theta}}$ and $\nabla_{\theta_i} \bm{D}^t_{\bm{\theta}}$, but only to their unbiased random feature representations.
Therefore, $\bm{\varphi}^t_{\bm{\theta}}$ is implemented as the random features of the column of $\bm{D}^t_{\bm{\theta}}$ or its gradient (see Appendix \ref{app:gradient_estimators} for details).

While the observed subset sizes $|\cA_m|$ are often small, occasional large subsets can dominate the overall computational cost.
This motivates us to develop a more efficient solver for \eqref{eq:unified_linear_system} than the standard preconditioned conjugate gradient (PCG) method.

The solution of \eqref{eq:unified_linear_system} is also the unique minimizer of the problem:
\begin{align}
    \label{eq:prob_solve}
    &\bm{\xi}^t_{\bm{\theta}} = \argmin_{\bm{\xi} \in \mathbb{R}^{|\cA|}}h^t_{\bm{\theta}}(\bm{\xi}),\\
    &h^t_{\bm{\theta}}(\bm{\xi}) \coloneq \frac{1}{2}\bm{\xi}^\top (\varepsilon \bm{I} + \bm{Q}_t [\cA] \bm{\Sigma}_t [\cA] \bm{Q}_t[\cA]) \bm{\xi} -  \bm{\xi}^\top \bm{\varphi}^t_{\bm{\theta}}.
\end{align}
Recalling that $\bm{\Sigma}_t[\cA]$ has the feature vector representation
$\bm{\Sigma}_t[\cA] = \mathbb{E}_{\bm{\omega}}[\bm{\Psi}^t_{\bm{\omega}, \cA} \bm{\Psi}^{t\top}_{\bm{\omega}, \cA}]$
with $\bm{\Psi}^t_{\bm{\omega}, \cA} \coloneq (\bm{\psi}_{\bm{\omega}}(\bm{x}^t_i))^\top_{i \in \cA}$,
the gradient of $h^t_{\bm{\theta}}$ can be written as
\begin{align}
    \nabla_{\bm{\xi}} h^t_{\bm{\theta}}(\bm{\xi})
    = \varepsilon \bm{\xi} + \bm{Q}_t[\cA] \mathbb{E}_{\bm{\omega}}[\bm{\Psi}^t_{\bm{\omega}, \cA}\bm{\Psi}^{t\top}_{\bm{\omega}, \cA}] \bm{Q}_t[\cA] \bm{\xi} - \bm{\varphi}^t_{\bm{\theta}},
\end{align}
which can be efficiently approximated via Monte Carlo estimation with $\bm{\omega} \sim p$.
By iteratively updating $\bm{\xi}$ using this stochastic gradient,
we can solve \eqref{eq:unified_linear_system} in $O(|\cA|T_{\mathrm{solve}})$ evaluations of $\bm{\psi}_{\bm{\omega}}(\bm{x}_i)$, where $T_{\mathrm{solve}}$ denotes the fixed number of solver iterations.

Although this stochastic solve strategy is highly stable in practice due to the strong convexity of \eqref{eq:prob_solve}, the finite $T_{\mathrm{solve}}$ leads to an approximation bias.
To mitigate this, we switch between two different linear solvers.
We adopt the stochastic solver above if $|\cA|$ is large (e.g., $|\cA| \geq 10^3$),
and otherwise default to PCG, whose complexity is typically $O(|\cA|^2)$.

\subsection{Gap Analysis}\label{sec:gap}
As in \eqref{eq:tangent}, we use the tangent-based lower bound for the observation-independent term $f_{\mathrm{den}}(\bm{L}) \coloneq -\log\det(\bm{I}+\bm{L})$ within the objective function $\ell_\lambda(\bm{L})$.
While the lower bound guarantees the touching condition $\ell_\lambda(\bm{L}_t) = g_{\lambda}(\bm{L}_t; \bm{L}_t)$ exactly and suffices for the monotonicity of the MM, we intentionally introduce the Nystr{\"o}m approximation $\widehat{\bm{L}}_t \approx \bm{L}_t$ into this term to achieve computational scalability.

Let $\hat{g}_{\lambda} (\bm{L}; \bm{L}_t, \widehat{\bm{L}}_t)$ be the approximated surrogate obtained by replacing the exact tangent bound $f_{\mathrm{den}}(\bm{L}_t)-\tr\{(\bm{I} + \bm{L}_t)^{-1} (\bm{L} - \bm{L}_t)\}$ with its approximation $f_\mathrm{den}(\widehat{\bm{L}}_t)-\tr\{(\bm{I} + \widehat{\bm{L}}_t)^{-1} (\bm{L} - \widehat{\bm{L}}_t)\}$.\footnote{We analyze the gap including the constant terms independent of $\bm{L}$, while the exact surrogate $g_\lambda$ in \eqref{eq:surrogate} is defined up to a constant.}
This substitution yields
$\ell_\lambda (\bm{L}_t) = g_{\lambda}(\bm{L}_t; \bm{L}_t) \geq \hat{g}_{\lambda}(\bm{L}_t; \bm{L}_t, \widehat{\bm{L}}_t)$.
This approximation gap is quantified by the Bregman divergence $D_{f_{\mathrm{den}}}$:
\begin{align}
    \gamma_t \coloneq \ell_{\lambda}(\bm{L}_t) - \hat{g}_{\lambda}(\bm{L}_t;  \bm{L}_t, \widehat{\bm{L}}_t)
    = D_{f_{\mathrm{den}}}(\bm{L}_t \Vert \widehat{\bm{L}}_t) \geq 0.
\end{align}
We refer to $\gamma_t$ as the anchor gap induced by the Nystr{\"o}m approximation.

Let $G_t \coloneq \hat{g}_{\lambda}(\bm{L}_{t+1}; \bm{L}_t, \widehat{\bm{L}}_t) - \hat{g}_{\lambda}(\bm{L}_t; \bm{L}_t, \widehat{\bm{L}}_t)$ denote the surrogate improvement achieved during the $t$-th inner step.
Due to the anchor gap, achieving $G_t \geq \gamma_t$ is sufficient to guarantee the monotonic ascent of the outer iterations.
Fortunately, the anchor gap is typically small and theoretically controllable.
\begin{lemma}\label{lem:gap_bound}
    The anchor gap $\gamma_t$ is upper-bounded by:
    \begin{align}
        \label{eq:gap_bound}
        \gamma_t \leq \frac{1}{2} \lVert (\bm{I} + \widehat{\bm{L}}_t)^{-1/2} (\bm{L}_t - \widehat{\bm{L}}_t) (\bm{I} + \widehat{\bm{L}}_t)^{-1/2}\rVert^2_F.~
    \end{align}
\end{lemma}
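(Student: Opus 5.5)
The plan is to write the Bregman divergence of $f_{\mathrm{den}}(\bm{L}) = -\log\det(\bm{I}+\bm{L})$ explicitly, reduce it to a scalar function of eigenvalues, and bound that scalar function by a quadratic. Set $\bm{A} \coloneq \bm{I}+\widehat{\bm{L}}_t \succ 0$ and $\bm{\Delta} \coloneq \bm{L}_t - \widehat{\bm{L}}_t$, so that $\bm{I}+\bm{L}_t = \bm{A}+\bm{\Delta}$. Since $f_{\mathrm{den}}$ is convex with gradient $-(\bm{I}+\bm{L})^{-1}$, we have
\begin{align}
    \gamma_t = D_{f_{\mathrm{den}}}(\bm{L}_t \Vert \widehat{\bm{L}}_t) = -\log\det(\bm{A}+\bm{\Delta}) + \log\det\bm{A} + \tr(\bm{A}^{-1}\bm{\Delta}).
\end{align}
This matches the definition of the anchor gap: $\ell_\lambda(\bm{L}_t) - \hat{g}_\lambda(\bm{L}_t;\bm{L}_t,\widehat{\bm{L}}_t)$ equals $f_{\mathrm{den}}(\bm{L}_t)$ minus the approximate tangent bound evaluated at $\bm{L}_t$, because the observation terms and the regularizer cancel. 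Now introduce the whitened perturbation $\bm{F} \coloneq \bm{A}^{-1/2}\bm{\Delta}\bm{A}^{-1/2}$, which is symmetric. With it,
\begin{align}
    \gamma_t = \tr\bm{F} - \log\det(\bm{I}+\bm{F}) = \sum_j \bigl(\mu_j - \log(1+\mu_j)\bigr),
\end{align}
where the $\mu_j$ are the eigenvalues of $\bm{F}$. These satisfy $1+\mu_j > 0$ because $\bm{I}+\bm{F} = \bm{A}^{-1/2}(\bm{I}+\bm{L}_t)\bm{A}^{-1/2} \succ 0$. The target right-hand side is exactly $\frac12\|\bm{F}\|_F^2 = \frac12\sum_j \mu_j^2$, so it suffices to show $\mu - \log(1+\mu) \le \mu^2/2$ for each eigenvalue.

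The main obstacle is that this scalar inequality holds only for $\mu \ge 0$. For $\mu \in (-1,0)$ we have $\log(1+\mu) < \mu - \mu^2/2$, and the inequality fails. So I need the sign structure of the Nyström residual, and I would invoke the standard fact that $\widehat{\bm{M}}_t \preceq \bm{M}_t$. To see this, write $\bm{M}_t = \bm{B}^\top\bm{B}$ with $\bm{B} \coloneq (\bm{Q}_t\bm{\Sigma}_t\bm{Q}_t)^{1/2}$, which exists since $\bm{M}_t \succeq 0$. Then $\widehat{\bm{M}}_t = \bm{B}^\top \bm{P}\bm{B}$, where $\bm{P}$ is the orthogonal projection onto the column span of $\bm{B}_{:,J}$, using the identity $\bm{C}^\top(\bm{C}\bm{C}^\top)^{+}\bm{C} = $ projection for $\bm{C} = \bm{B}_{:,J}^\top$. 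Hence $\bm{\Delta} = \bm{M}_t - \widehat{\bm{M}}_t = \bm{B}^\top(\bm{I}-\bm{P})\bm{B} \succeq 0$, so $\bm{F} \succeq 0$ and every $\mu_j \ge 0$. This holds regardless of how $J$ is sampled, so the quality weighting plays no role here.

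It remains to check the scalar inequality on $\mu \ge 0$. Let $\phi(\mu) = \mu^2/2 - \mu + \log(1+\mu)$. Then $\phi(0)=0$ and $\phi'(\mu) = \mu - 1 + 1/(1+\mu) = \mu^2/(1+\mu) \ge 0$, so $\phi \ge 0$ on $[0,\infty)$. Summing over $j$ gives $\gamma_t \le \frac12\|\bm{F}\|_F^2$, which is \eqref{eq:gap_bound}.
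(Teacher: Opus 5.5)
Your proposal is correct and follows the paper's proof essentially step for step. Both arguments rewrite the gap as $\sum_j(\mu_j - \log(1+\mu_j))$ over the eigenvalues of $(\bm{I}+\widehat{\bm{L}}_t)^{-1/2}(\bm{L}_t-\widehat{\bm{L}}_t)(\bm{I}+\widehat{\bm{L}}_t)^{-1/2}$, use the orthogonal-projection structure of the Nystr\"om approximation to get $\mu_j \ge 0$, and apply $\mu - \log(1+\mu) \le \mu^2/2$ on $[0,\infty)$. The differences are only cosmetic: you check the scalar inequality by differentiation instead of the paper's integral representation, and you spell out the projection argument for $\widehat{\bm{M}}_t \preceq \bm{M}_t$, which the paper only asserts at this point.
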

\begin{theorem}\label{thm:ex_gap_bound}
    Let $k_0 \coloneq \Sigma_{t, ii}$. The expected anchor gap given the current parameters $\bm{\theta}_t$ admits the following $r_{\mathrm{Nys}}$-dependent upper bound:
    \begin{align}
        \label{eq:ex_gap_bound}
        \mathbb{E}[\gamma_t | \bm{\theta}_t] \leq \frac{k^2_0 \lVert \bm{Q}_t \rVert^4_F}{2(1+\varepsilon)^2 r_{\mathrm{Nys}}},
    \end{align}
    which implies $\mathbb{E}[\gamma_t | \bm{\theta}_t] = O(r^{-1}_{\mathrm{Nys}})$.
\end{theorem}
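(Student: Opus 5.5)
Starting from Lemma~\ref{lem:gap_bound}, write $\bm{F}_t \coloneq \bm{L}_t - \widehat{\bm{L}}_t = \bm{M}_t - \widehat{\bm{M}}_t$; the $\varepsilon\bm{I}$ terms cancel. Since $\widehat{\bm{M}}_t \succeq 0$, we have $\bm{I}+\widehat{\bm{L}}_t \succeq (1+\varepsilon)\bm{I}$, so $\lVert(\bm{I}+\widehat{\bm{L}}_t)^{-1/2}\rVert_{\mathrm{op}}\le (1+\varepsilon)^{-1/2}$. Applying $\lVert \bm{A}\bm{F}\bm{A}\rVert_F \le \lVert \bm{A}\rVert_{\mathrm{op}}^2\lVert \bm{F}\rVert_F$ then gives
\begin{align}
    \gamma_t \le \frac{1}{2(1+\varepsilon)^2}\lVert \bm{M}_t - \widehat{\bm{M}}_t\rVert_F^2 .
\end{align}
It therefore suffices to show $\mathbb{E}[\lVert \bm{M}_t-\widehat{\bm{M}}_t\rVert_F^2 \mid \bm{\theta}_t] \le k_0^2\lVert\bm{Q}_t\rVert_F^4 / r_{\mathrm{Nys}}$.

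For this Nystr\"om error bound, write $\bm{M}_t = \bm{B}^\top\bm{B}$ using the RKHS (or any) factorization $\bm{B} = [q_1\phi(\bm{x}_1),\dots,q_N\phi(\bm{x}_N)]$. The Nystr\"om residual $\bm{M}_t-\widehat{\bm{M}}_t$ is the Gram matrix of the residuals $(\bm{I}-\bm{P}_J)q_i\phi(\bm{x}_i)$, where $\bm{P}_J$ projects onto $\mathrm{span}\{\phi(\bm{x}_j)\}_{j\in J}$; in particular it is PSD. Hence $\lVert \bm{M}_t-\widehat{\bm{M}}_t\rVert_F \le \tr(\bm{M}_t-\widehat{\bm{M}}_t)$, and also $\lVert\cdot\rVert_F^2\le \lVert\cdot\rVert_{\mathrm{op}}\tr(\cdot)$.

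Note that $\tr\bm{M}_t = k_0\lVert\bm{Q}_t\rVert_F^2$ and that the sampling weights are $p_i = M_{t,ii}/\tr\bm{M}_t$, so this is exactly diagonal (length-squared) sampling as in \citet{drineas2005}. I would invoke the additive-error column-sampling bound: for $\bm{C} = \bm{B}\bm{S}$ with $r_{\mathrm{Nys}}$ columns sampled proportionally to squared norms, $\mathbb{E}\lVert \bm{B}^\top\bm{B}-\bm{B}^\top\bm{P}_C\bm{B}\rVert_F \lesssim \lVert \bm{B}\rVert_F^2/\sqrt{r_{\mathrm{Nys}}}$. More directly, I would bound $\lVert(\bm{I}-\bm{P}_J)\bm{B}\rVert^2$ by comparing to the rescaled sample estimator $\bm{B}\bm{B}^\top \approx \frac{1}{r}\sum_s \bm{b}_{j_s}\bm{b}_{j_s}^\top/p_{j_s}$. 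Its variance satisfies
\begin{align}
    \mathbb{E}\Bigl\lVert \tfrac1r\textstyle\sum_s \bm{b}_{j_s}\bm{b}_{j_s}^\top/p_{j_s} - \bm{B}\bm{B}^\top\Bigr\rVert_F^2 \le \frac{1}{r}\sum_i \frac{\lVert \bm{b}_i\rVert^4}{p_i} = \frac{(\tr\bm{M}_t)^2}{r} = \frac{k_0^2\lVert\bm{Q}_t\rVert_F^4}{r},
\end{align}
using $\lVert\bm{b}_i\rVert^2 = q_i^2k_0$.

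The main obstacle is linking this variance to the projection residual, for instance through $\lVert \bm{B}^\top(\bm{I}-\bm{P}_J)\bm{B}\rVert_F^2 \le \lVert(\bm{I}-\bm{P}_J)(\bm{B}\bm{B}^\top - \widetilde{\bm{C}}\widetilde{\bm{C}}^\top)(\bm{I}-\bm{P}_J)\rVert_F^2$. This works because $(\bm{I}-\bm{P}_J)\widetilde{\bm{C}}=0$ for the rescaled sampled columns $\widetilde{\bm{C}}$. It converts the residual into the sampling deviation and yields the required $1/r_{\mathrm{Nys}}$ rate without a square root. A secondary issue is that the paper samples without replacement. I would handle this either by arguing that sampling without replacement spans at least as large a subspace as with-replacement sampling (coupling via the Gumbel top-$k$ construction) or by noting that it only reduces variance. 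Combining the pieces gives \eqref{eq:ex_gap_bound}.
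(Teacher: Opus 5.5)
Your proposal is correct and follows the paper's proof essentially step for step: the operator-norm reduction to $\lVert \bm{M}_t-\widehat{\bm{M}}_t\rVert_F^2/(2(1+\varepsilon)^2)$, the dual-space variance bound for the rescaled with-replacement estimator, the projection identity turning the residual into $(\bm{I}-\bm{P}_J)(\bm{B}\bm{B}^\top-\widetilde{\bm{C}}\widetilde{\bm{C}}^\top)(\bm{I}-\bm{P}_J)$ (which is an equality, not just an inequality), and a coupling argument for sampling without replacement. For that last step, build the coupling from the sequential construction rather than from Gumbel top-$k$ itself: take the first $r_{\mathrm{Nys}}$ distinct entries of an i.i.d.\ sequence, so the distinct with-replacement indices lie inside $J$, $(\bm{I}-\bm{P}_J)\widetilde{\bm{C}}=\bm{0}$ still holds, and your chain applies verbatim. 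Also drop the ``only reduces variance'' alternative, which is not justified for weighted sampling without replacement.
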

The proofs are shown in Appendix \ref{app:proof}.

These bounds demonstrate that the anchor gap is bounded by the square of the relative difference between $\bm{L}_t$ and $\widehat{\bm{L}}_t$, ensuring that the gap is favorably small, and decays at a rate of $O(r^{-1}_{\mathrm{Nys}})$.
In Appendix \ref{app:exp}, we evaluate this gap on synthetic data mimicking real-world popularity skew, observing that the theoretical rate is somewhat conservative in this setting.

\subsection{Computational Complexity}\label{sec:complexity}
We analyze the per-outer-step computational complexity of our learning algorithm using RFFs.
We treat the number of Monte Carlo samples and the Nystr{\"o}m rank as constants and omit them from the following analysis.
For simplicity, we assume the use of the stochastic solver for the linear system \eqref{eq:unified_linear_system}.

First, at the beginning of each outer step, we cache the compact Woodbury representation of $\widehat{\bm{W}}_t$ via the quality-weighted Nystr{\"o}m method to approximate \eqref{eq:grad_denom}; this takes $O(Nd)$.
The $O(N \log r_{\mathrm{Nys}})$ cost of Gumbel-top-$k$ sampling is absorbed into this $O(Nd)$ term.
Since the random feature-based unbiased approximation of $\nabla L_{\bm{\theta}}$ requires $O(Nd)$ and applying the cached $\widehat{\bm{W}}_t$ to compute $\tr(\widehat{\bm{W}}_t \nabla\bm{L}_{\bm{\theta}})$ takes $O(N)$,  
the overall approximation of \eqref{eq:grad_denom} scales as $O(Nd)$.

For the gradient of the first term in \eqref{eq:surrogate}, we evaluate the random features and their derivatives for all $\cA_m$ in a minibatch $\mathcal{B}_t$ of size $B$.
Because the dense matrix $\bm{L}[\cA_m]$ is never explicitly formed, this evaluation takes $O(d \sum_{m \in \mathcal{B}_t} |\cA_m|) = O(B\kappa d)$ time.
Solving the linear system \eqref{eq:unified_linear_system} for $m \in \mathcal{B}_t$ imposes an additional $O(B\kappa d  T_{\mathrm{solve}})$ complexity, where the $O(d)$ factor arises from evaluating $\bm{\omega}^\top \bm{x}_i$ for the RFFs.

As a result, our learning algorithm takes
\begin{align}
    &O(\underbrace{Nd}_{\mbox{{\footnotesize caching}}} + T_{\mathrm{in}}(
    \underbrace{Nd}_{\mbox{{\footnotesize obs.-indep.}}}
    + \underbrace{B\kappa d + B\kappa dT_{\mathrm{solve}}}_{\mbox{{\footnotesize obs.-depend.}}})
    )\\
    \label{eq:complexity}
    &\quad= O(T_{\mathrm{in}} (Nd + B\kappa d T_{\mathrm{solve}}))
\end{align}
per outer step, where $T_{\mathrm{in}}$ denotes the fixed number of inner steps.
Treating $T_{\mathrm{in}}$ and $T_{\mathrm{solve}}$ as constants,
the overall complexity reduces to $O(Nd + B \kappa d)$, thereby achieving linear scalability.
Note that $\kappa$ is replaced with $\kappa^2$ if we employ PCG instead of the stochastic solver.
We empirically verify this complexity via runtime measurements in Section \ref{sec:ex}.

As a remark, performing an exact acceptance test for the trust region
$(1-\rho) \bm{L}_t[\cA_m] \preceq \bm{L}_{\bm{\theta}}[\cA_m] \preceq (1 +\rho) \bm{L}_t[\cA_m]$
and introducing a backtracking line search are incompatible with our target complexity.
Since the test for each $m$ constitutes a $|\cA_m| \times |\cA_m|$ spectral problem, it requires $\Omega(M\kappa^3)$ operations.
Fortunately, the step size of the inner stochastic optimization is naturally controlled by the learning rate in our scheme.
For a sufficiently small learning rate, we have $\bm{L}_{\bm{\theta}} \to \bm{L}_t$ and $\lVert \bm{E}_{t,m}\rVert_{\mathrm{op}} \to 0$ for all $m$; thus, $\bm{L}_{\bm{\theta}}$ reliably remains within the desired trust region.
We therefore rely on the learning rate to implicitly enforce the trust-region constraint, omitting the explicit acceptance test in practice.

\section{Numerical Experiments}\label{sec:ex}
\subsection{Dataset and Configuration}
We evaluate our method against the LR+D structure on the \ttM \citep{harper2015} and \ttI datasets \citep{instacart2017}.
We treat movies and grocery products as items ($i \in \cY$), and each user's rating history or prior orders as an observed subset.

For both datasets, we uniformly subsample the observations to form training ($M_{\mathrm{train}} \!=\! \num{10000}$), validation ($M_{\mathrm{val}} \!=\! \num{500}$), and test ($M_{\mathrm{test}}\!=\!\num{1000}$) sets.
For KS-DPP, we use a Gaussian kernel with RFFs, an $L_2$ regularizer, and set hyperparameters $r_{\mathrm{Nys}} \!=\! 32$, $\rho \!=\! 0.2$, $T_{\mathrm{in}} \!=\! 10$, and $T_{\mathrm{solve}} \!=\! 50$.
Both models are trained using Adam \citep{kingma2015} with a minibatch size of $B\!=\!64$.
The learning rate is fixed at $0.01$ for KS-DPP, and tuned to $3 \!\times\! 10^{-4}$ (\ttM) or $5 \!\times\! 10^{-5}$ (\ttI) for LR+D to maximize training stability.

We compare LR+D across ranks $r \in \{4, 8, \ldots, \num{1024}\}$ and KS-DPP across dimensions $d \in \{4, 8, 16, 32\}$.
To aid LR+D convergence, we employ a warm-start strategy, initializing its parameters from the learned results of the preceding smaller $r$.
We also evaluate scalability by sweeping the ground set size $N$ from $\num{3000}$ up to the full catalogue $N_{\max}$ (excluding completely unobserved items), retaining the most frequently observed items first.

Detailed configurations, including more information about the datasets, the number of sampled random features, held-out log-likelihood evaluation procedures, and hardware and software specifications, are provided in Appendix \ref{app:exp_details}.

\subsection{Main Results}

\begin{figure*}[t]
    \centering
    \captionsetup{skip=-5pt}
    \begin{minipage}{0.495\textwidth}
        \centering
        \subcaption{Parameter Efficiency}
        \includegraphics[width = 1.0\columnwidth]{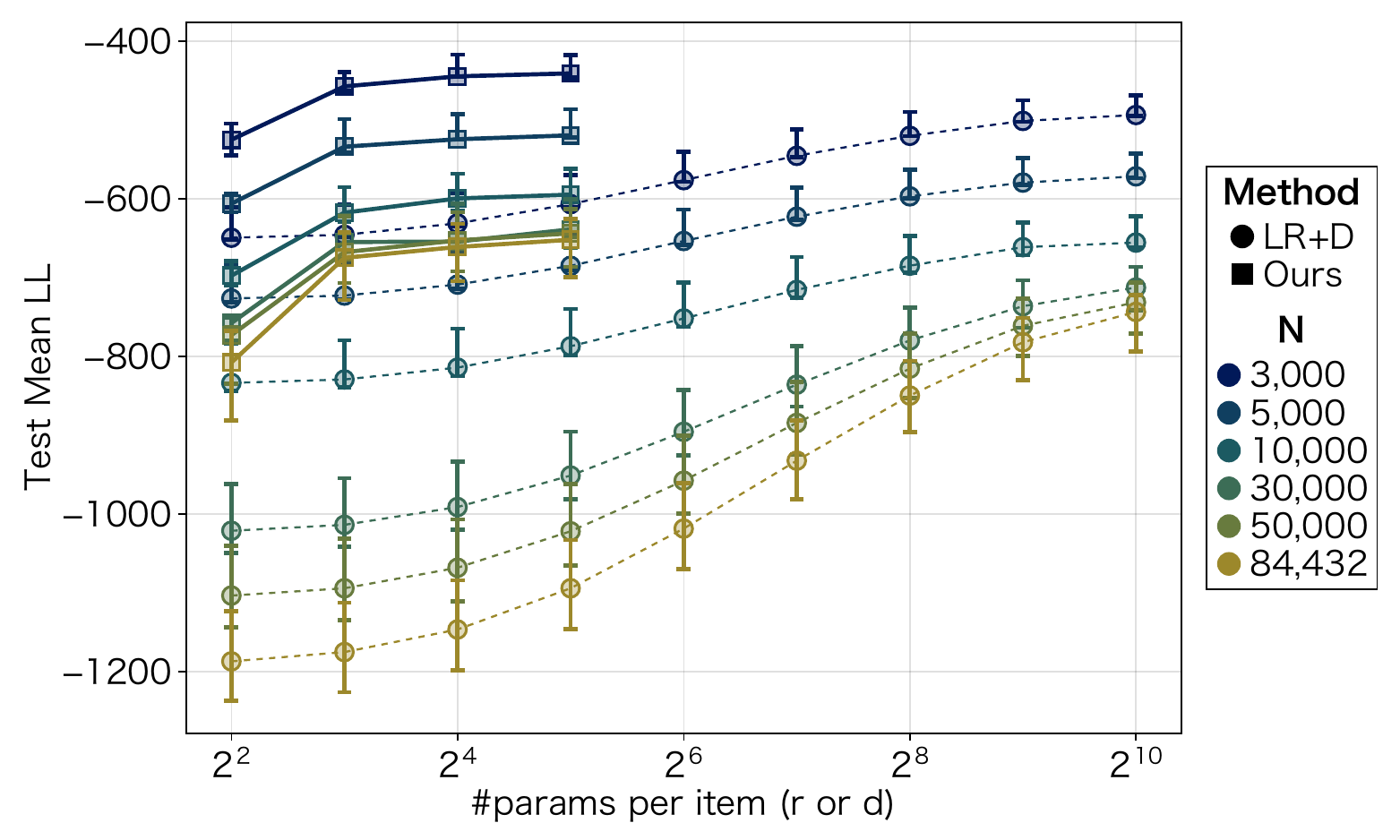}
        \label{subfig:movielens_quality}
    \end{minipage}
    \begin{minipage}{0.495\textwidth}
        \centering
        \subcaption{Convergence Time}
        \includegraphics[width = 1.0\columnwidth]{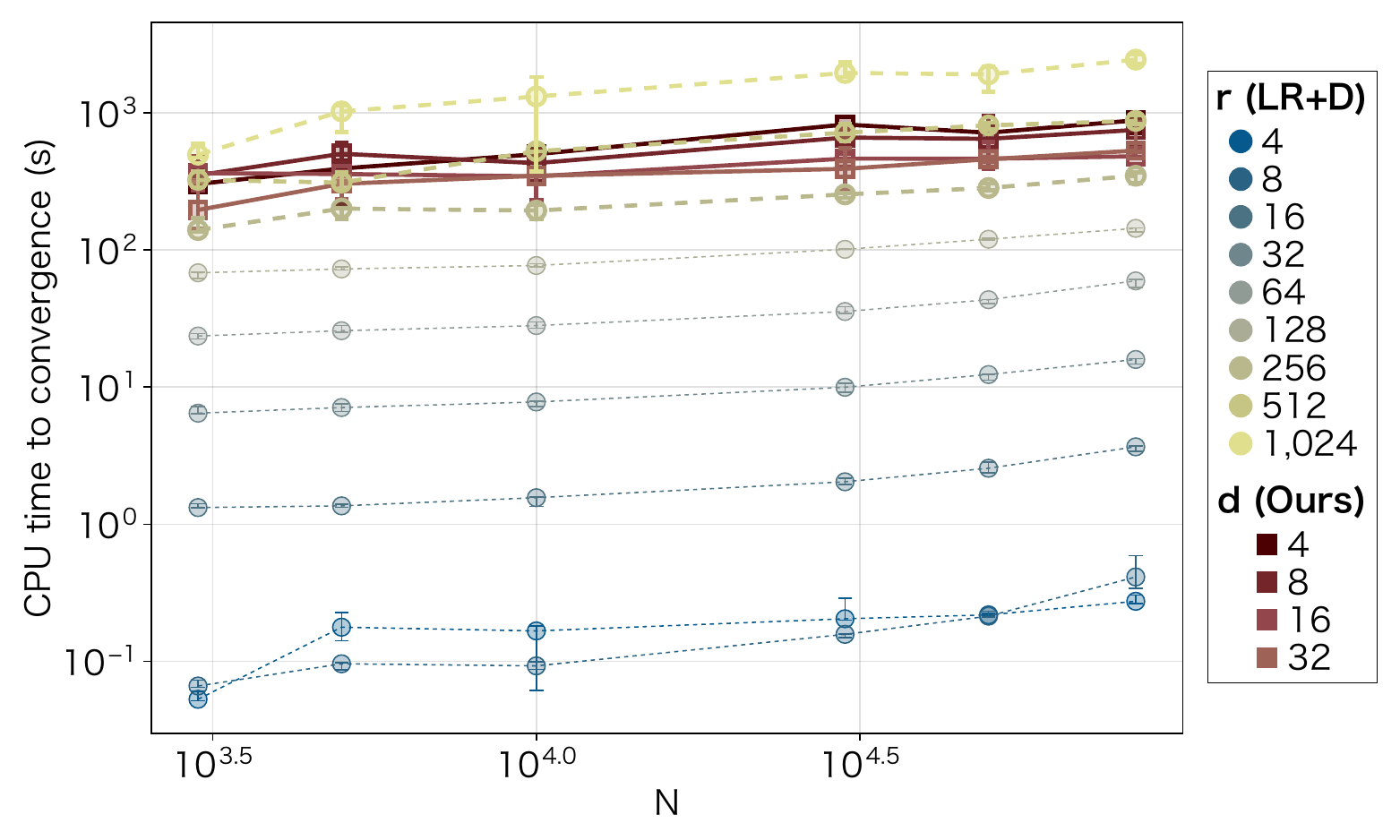}
        \label{subfig:movielens_time}
    \end{minipage}
    \caption{Results on the \ttM dataset. Each marker shows the median and each bar corresponds to minimum/maximum values of $3$ trials.}
    \label{fig:movielens_result}
\end{figure*}

\begin{figure*}[t]
    \centering
    \captionsetup{skip=-5pt}
    \begin{minipage}{0.495\textwidth}
        \centering
        \subcaption{Parameter Efficiency}
        \includegraphics[width = 1.0\columnwidth]{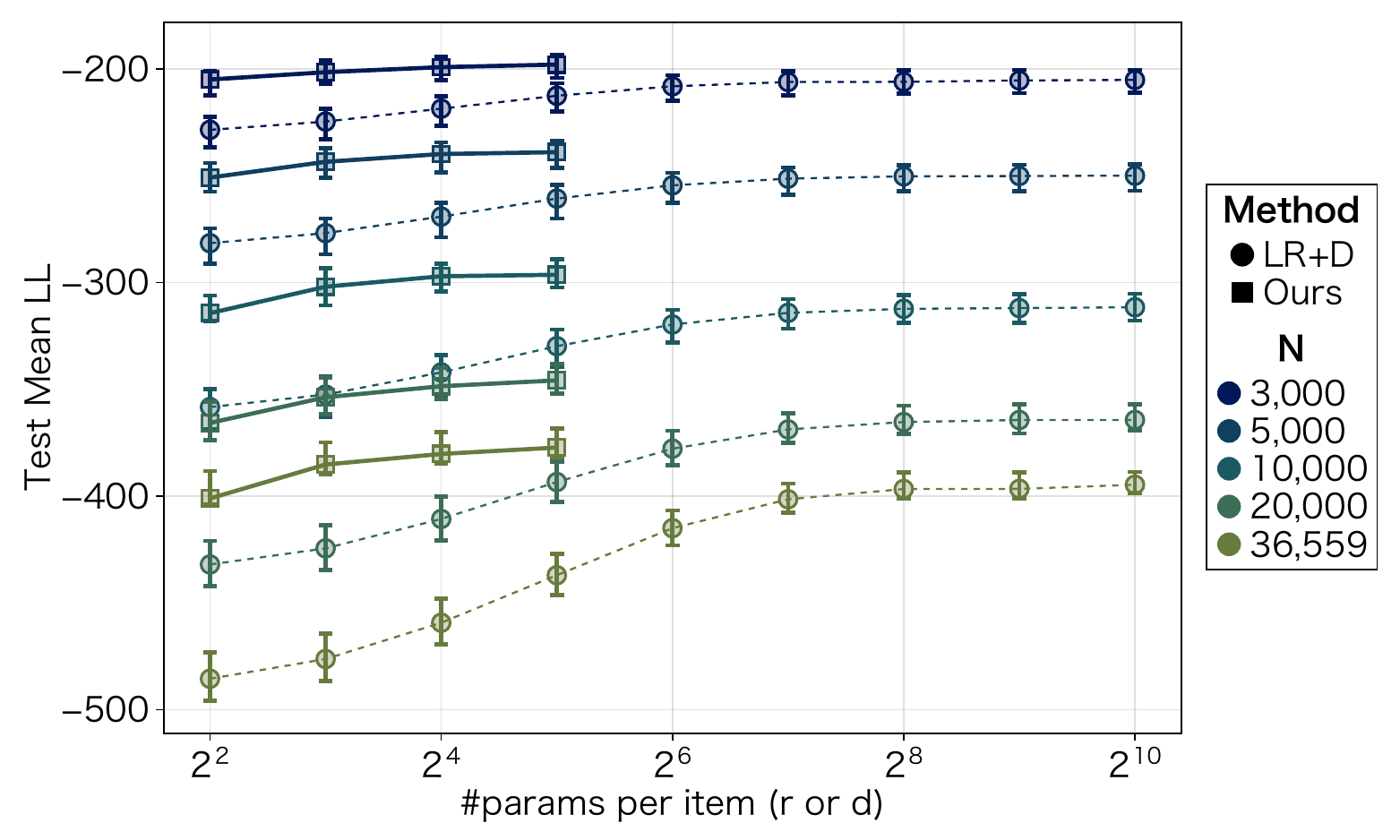}
        \label{subfig:instacart_quality}
    \end{minipage}
    \begin{minipage}{0.495\textwidth}
        \centering
        \subcaption{Convergence Time}
        \includegraphics[width = 1.0\columnwidth]{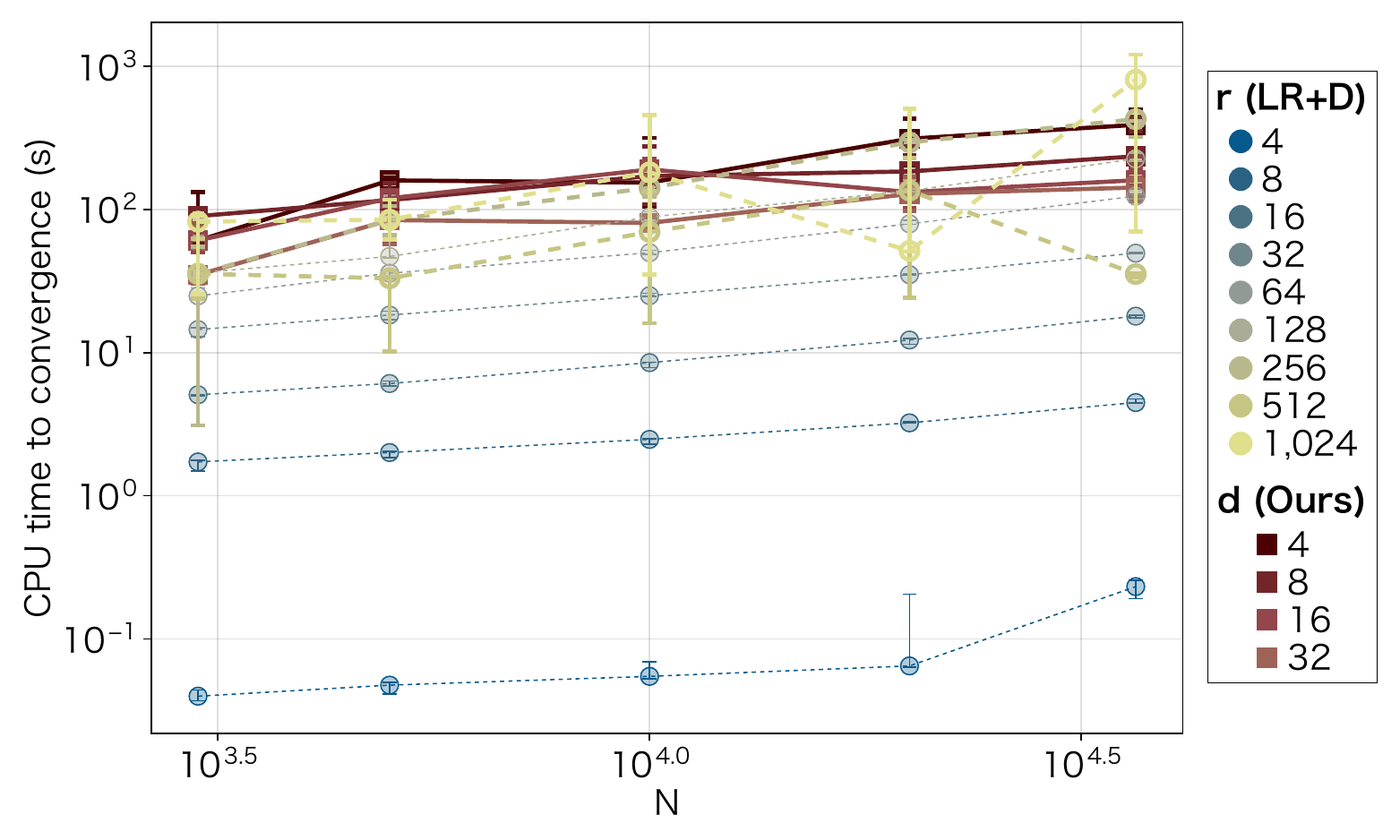}
        \label{subfig:instacart_time}
    \end{minipage}
    \caption{Results on the \ttI dataset. Each marker shows the median and each bar corresponds to minimum/maximum values of $3$ trials.}
    \label{fig:instacart_result}
\end{figure*}

Figures \ref{fig:movielens_result} and \ref{fig:instacart_result} show the results for the \ttM and \ttI datasets, respectively.
As shown in Figures \ref{subfig:movielens_quality} and \ref{subfig:instacart_quality}, KS-DPP with $d=8$ consistently achieves generalization performance comparable to or better than LR+D with $r=\num{1024}$.
This remarkable parameter efficiency arises from the parameterization of the kernel matrix via the nonlinear kernel function.
Conversely, KS-DPP with $d=4$ underperforms compared to $d\geq 8$ on both datasets,
suggesting that their intrinsic dimensionality exceeds $d=4$.

Regarding convergence time, Figure \ref{subfig:movielens_time} shows that KS-DPP is comparable to LR+D with $r \in [256, 512]$ on the \ttM dataset.
Since KS-DPP yields higher test log-likelihoods than LR+D with $r=\num{1024}$, our method significantly improves the trade-off between generalization performance and learning time.
For the \ttI dataset (Figure \ref{subfig:instacart_time}), LR+D exhibits non-monotonic convergence times.
This artifact may stem from the warm-start strategy: because LR+D performance saturates around $r=128$, initializations for larger $r$ already lie on a plateau, causing the optimization to stall.
Nevertheless, KS-DPP with $d=32$ converges faster than LR+D with $r=256$, further supporting the improved computational trade-off.

Interestingly, increasing $d$ tends to yield faster and more stable convergence for KS-DPP, despite the $O(Nd)$ per-outer-step cost scaling (the measured per-outer-step times are reported later).
Although our objective differs from standard matrix factorization, this benign optimization behavior is conceptually linked to the phenomenon where over-parameterization eliminates spurious local minima \citep{ge2017, boumal2020}.
We believe a fundamentally similar mechanism drives the accelerated convergence observed in our experiments.

\subsection{Comparison with Exact Method}
\begingroup
\renewcommand{\arraystretch}{1.15}
\begin{table*}[t]
\setlength\tabcolsep{0pt}
\centering
\caption{Comparison of the test mean log-likelihoods with the exact MM method on \ttM. The values denote $\mbox{med}^{\mbox{max}}_{\mbox{min}}$ of 3 trials.}
\label{tab:comparison_mm}
\begin{threeparttable}
\begin{tabular}{@{\extracolsep{2ex}}*{6}{lccccc}}
\toprule
\multicolumn{1}{c}{\textbf{Method}} & \textbf{N=200} & \textbf{N=500} & \textbf{N=1000} & \textbf{N=2000} & \textbf{N=3000} \\
\midrule
\textbf{MM} & \ensuremath{-98.06^{\scriptscriptstyle -95.57}_{\scriptscriptstyle -98.37}} & \ensuremath{-191.33^{\scriptscriptstyle -185.40}_{\scriptscriptstyle -192.71}} & \ensuremath{-297.04^{\scriptscriptstyle -285.09}_{\scriptscriptstyle -297.94}} & \ensuremath{-420.40^{\scriptscriptstyle -401.43}_{\scriptscriptstyle -421.32}} & \ensuremath{-490.63^{\scriptscriptstyle -467.06}_{\scriptscriptstyle -491.55}} \\
\midrule
\textbf{LR+D r=32} & \ensuremath{-134.95^{\scriptscriptstyle -128.61}_{\scriptscriptstyle -135.55}} & \ensuremath{-264.13^{\scriptscriptstyle -250.62}_{\scriptscriptstyle -266.96}} & \ensuremath{-394.02^{\scriptscriptstyle -370.97}_{\scriptscriptstyle -396.16}} & \ensuremath{-528.06^{\scriptscriptstyle -495.67}_{\scriptscriptstyle -530.21}} & \ensuremath{-598.72^{\scriptscriptstyle -561.71}_{\scriptscriptstyle -601.60}} \\
\textbf{LR+D r=r\textsubscript{max}} & \ensuremath{-98.24^{\scriptscriptstyle -95.97}_{\scriptscriptstyle -98.54}} & \ensuremath{-192.29^{\scriptscriptstyle -186.56}_{\scriptscriptstyle -193.32}} & \ensuremath{-299.63^{\scriptscriptstyle -287.62}_{\scriptscriptstyle -301.14}} & \ensuremath{-425.11^{\scriptscriptstyle -406.19}_{\scriptscriptstyle -426.03}} & \ensuremath{-497.28^{\scriptscriptstyle -474.39}_{\scriptscriptstyle -499.02}} \\
\midrule
\textbf{KS-DPP d=32} & \ensuremath{-99.12^{\scriptscriptstyle -96.17}_{\scriptscriptstyle -99.40}} & \ensuremath{-193.79^{\scriptscriptstyle -186.30}_{\scriptscriptstyle -194.33}} & \ensuremath{-299.17^{\scriptscriptstyle -286.61}_{\scriptscriptstyle -300.81}} & \ensuremath{-423.00^{\scriptscriptstyle -402.65}_{\scriptscriptstyle -424.42}} & \ensuremath{-493.22^{\scriptscriptstyle -468.52}_{\scriptscriptstyle -494.52}} \\
\bottomrule
\end{tabular}
\end{threeparttable}
\end{table*}

\endgroup
To assess the approximation capacity of LR+D and KS-DPP, we compare their test mean log-likelihoods against those obtained by the exact MM-based method \citep{kawashima2023} on the \ttM dataset for small $N$.
Table \ref{tab:comparison_mm} summarizes the principal results, where $r_{\max}=\min\{N, \num{1024}\}$ for LR+D.
All objective values here are evaluated exactly without approximations.
See Appendix \ref{app:exp} for the full table.

Overall, both LR+D and KS-DPP perform slightly below the exact MM.
Although LR+D with $r = N$ contains the full DPP model (by ignoring the small diagonal term), its final log-likelihoods fail to reach those of the exact MM.
This limitation suggests that the structural parameterization of LR+D is not necessarily suited for stochastic optimization.
Moreover, this optimization difficulty becomes more pronounced as $N$ increases; our proposed method surpasses LR+D for $N \geq \num{1000}$, even though KS-DPP does not contain the full DPP model and its optimization inherently relies on the Nystr{\"o}m approximation.

\subsection{Measured Computational Cost}\label{sec:measurement}
\begin{figure*}
    \centering
    \captionsetup{skip=0pt}
    \begin{minipage}{0.51\textwidth}
        \includegraphics[width=\columnwidth]{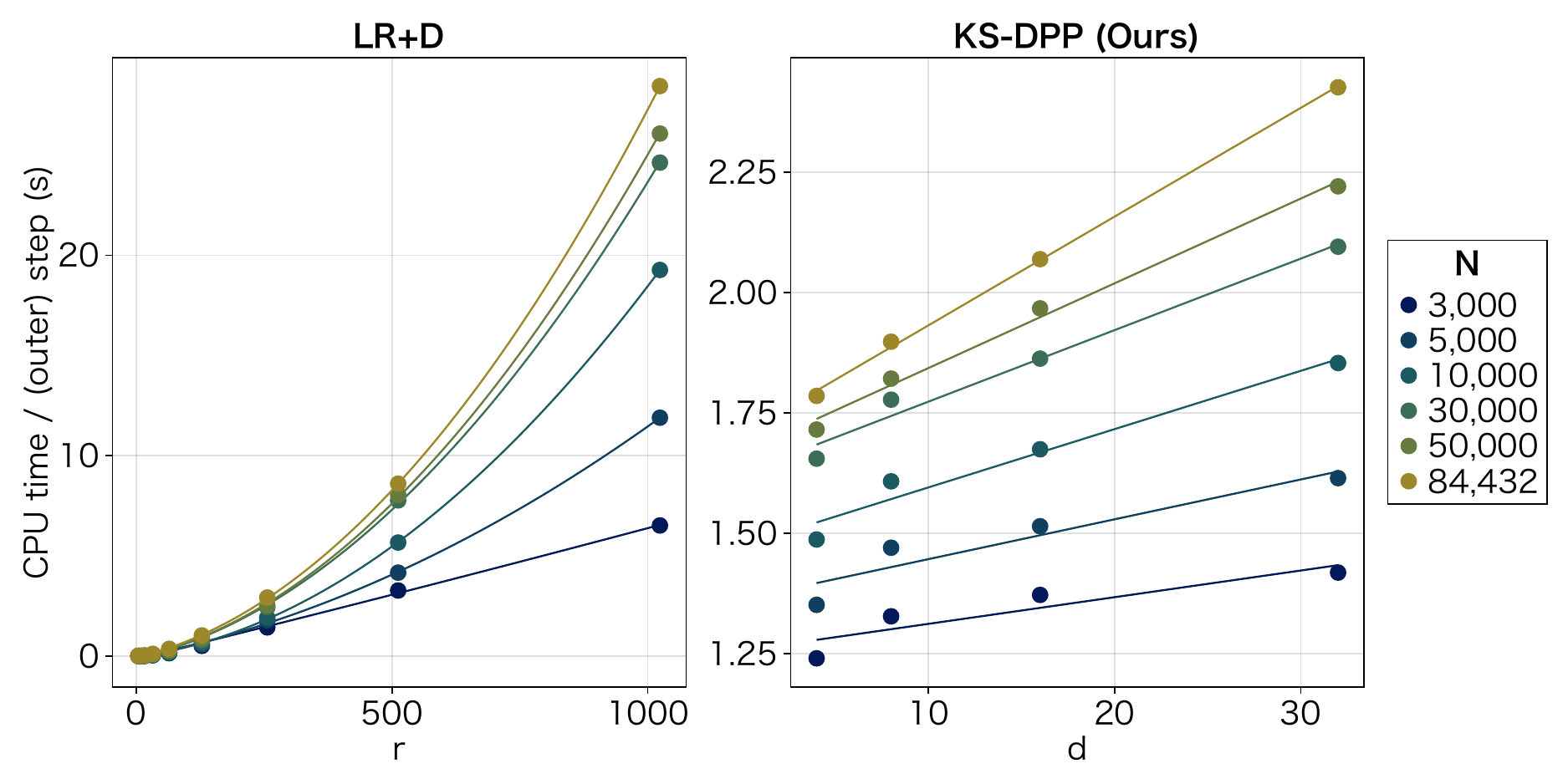}
        \caption{$r$ or $d$ vs. per-(outer-)step computational costs on \ttM. Each marker shows the mean of $3$ trials and the lines are quadratic (left) or linear (right) fits.} 
        \label{fig:cost_r_d}
    \end{minipage}
    \hspace{0.03\textwidth}
    \begin{minipage}{0.43\textwidth}
        \includegraphics[width=\columnwidth]{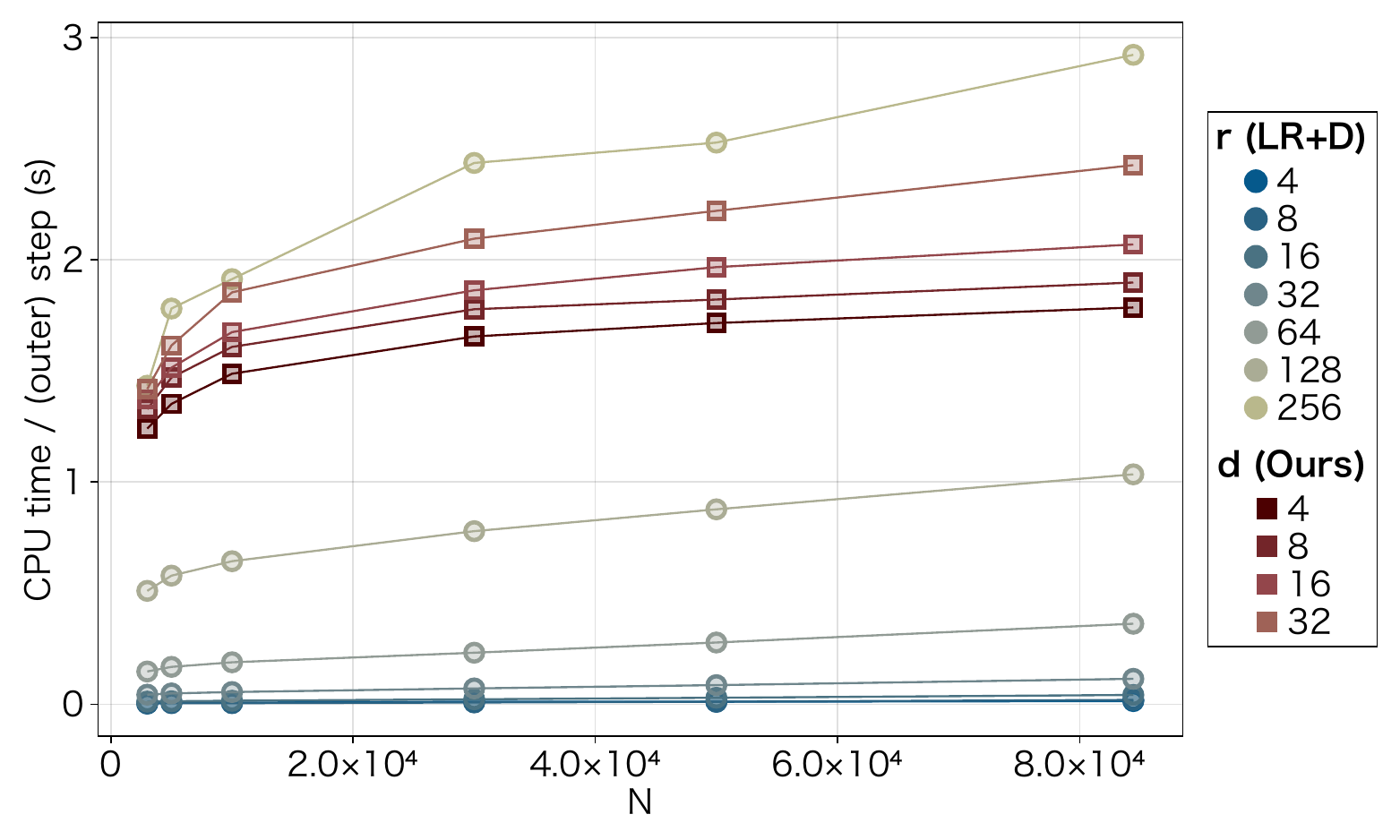}
        \caption{$N$ vs. per-(outer-)step computational costs on \ttM. Each marker shows the mean of $3$ trials.}
        \label{fig:cost_N}
    \end{minipage}
\end{figure*}
We measure the practical computational costs of LR+D and the proposed method across varying $r, d$, and $N$.
We focus here on the \ttM dataset; consistent scaling results for \ttI are provided in Appendix \ref{app:measurement}.
Figure \ref{fig:cost_r_d} illustrates how the per-step and per-outer-step computation times scale with $r$ and $d$ on the \ttM dataset.
As predicted by the complexity analysis in Section \ref{sec:complexity}, the per-outer-step computational cost of KS-DPP increases linearly with $d$, whereas the per-step cost of LR+D increases quadratically with $r$.
For datasets with complex structures and high intrinsic dimensionality, LR+D necessitates a large $r$ to capture such structures; however, the rapidly growing per-step cost can make learning high-quality models computationally prohibitive.
In contrast, the empirical linear coefficient for $d$ in KS-DPP remains favorably small, highlighting its suitability for complex data.
Although KS-DPP requires caching of the Nystr{\"o}m approximation, which incurs a large constant overhead per outer step, this cost is effectively amortized over the inner optimization steps.

Figure \ref{fig:cost_N} shows the dependence of the computational costs on $N$, omitting the results for LR+D with $r \geq 512$ for visual clarity.
Although the theoretical per-(outer-)step costs of both methods are linear with respect to $N$, this linear scaling is not explicitly visible for $N < \num{30000}$.
In this regime, the overall runtime for both methods is dominated by the observation-dependent terms, which are independent of $N$.
For KS-DPP, this overhead corresponds to the $O(T_{\mathrm{in}}B\kappa d T_{\mathrm{solve}})$ term in \eqref{eq:complexity}.

\section{Conclusion}
In this paper, we proposed KS-DPP, a parameterization of the DPP kernel using $(d-1)$-dimensional latent features and a nonlinear kernel function.
The nonlinear parameterization captures full-rank interactions even with a small $d$, achieving remarkably high parameter efficiency.
To learn KS-DPPs at scale, we derived a local quadratic minorizer for the DPP log-likelihood inspired by the MM algorithm, and developed a stochastic solver that evaluates the resulting large linear systems in linear time.
We further employed the quality-weighted Nystr{\"o}m approximation, establishing a theoretical error bound for the bias induced by this approximation.
Empirical evaluations on two real-world datasets demonstrated that KS-DPP significantly improves the trade-off between generalization performance and learning time, particularly as the problem size $N$ and dataset complexity grow.

Future work includes the development of precise and efficient stochastic operations for KS-DPPs, such as evaluation, sampling, and mode exploration.
Particularly for sampling, the Nystr{\"o}m-based dual representation studied by
\citet{affandi2013} may provide a natural starting point for developing
scalable samplers for learned KS-DPPs.

\subsubsection*{Acknowledgements}
Part of this work is supported by JST CREST JPMJCR26E3, JSPS KAKENHI JP25H01494, JP26K02989 and JP26K23861.

%\subsubsection*{References}
\bibliographystyle{apalike}
\bibliography{KSDPP}

\clearpage
\appendix
\thispagestyle{empty}
\counterwithin{figure}{section}
\counterwithin{table}{section}

% Supplementary material: To improve readability, you must use a single-column format for the supplementary material.
\onecolumn

\aistatstitle{Scalable Learning of Determinantal Point Processes with Kernel-Structured Interactions:\\
Supplementary Materials}

\section{Gradient Estimator and Implementation Details}\label{app:gradient_estimators}
\subsection{Surrogate Gradient}
To express the gradient estimators simply, we define some matrices:
\begin{align}
    \bm{Q} &\coloneq \mathrm{diag}(q_1, \ldots, q_N),\\
    \bm{\Psi}_{\bm{\omega}} &\coloneq (\bm{\psi}_{\bm{\omega}}(\bm{x}_1), \ldots, \bm{\psi}_{\bm{\omega}}(\bm{x}_N))^\top,\\
    \bm{\Psi}_{\bm{\omega}, \cA} &\coloneq (\bm{\psi}_{\bm{\omega}}(\bm{x}_i))^\top_{i \in \cA},\\
    \bm{\Delta}^{t,m}_{\bm{\theta}, \bm{\omega}} &\coloneq 
    \bm{Q}[\cA_m]\bm{\Psi}_{\bm{\omega}, \cA_m} \bm{\Psi}^\top_{\bm{\omega}, \cA_m} \bm{Q}[\cA_m]
    - \bm{Q}_t[\cA_m]\bm{\Psi}^t_{\bm{\omega}, \cA_m} \bm{\Psi}^{t\top}_{\bm{\omega}, \cA_m} \bm{Q}_t[\cA_m],
\end{align}
and let $r_m(i)$ be the local index (or rank) of element $i$ within $\cA_m$ when sorted in ascending order.
The sub/super-scripted $t$ of $\bm{Q}_t$ and $\bm{\Psi}^t_{\bm{\omega}}$ indicates that they are parameterized by the previous parameter $\bm{\theta}_t$.
Note that $\bm{\Psi}_{\bm{\omega}}$ and $\bm{\Psi}^t_{\bm{\omega}}$ are usually tall because the random feature $\bm{\psi}_{\bm{\omega}}(\bm{x})$ is low-dimensional in many cases.
For example, if the kernel function $k$ is shift-invariant and $\bm{\psi}$ is the RFF, $\bm{\psi}_{\bm{\omega}}(\bm{x}) = (\cos(\bm{\omega}^\top \bm{x}), \sin(\bm{\omega}^\top \bm{x}))^\top \in \mathbb{R}^2$.

The surrogate \eqref{eq:surrogate} involves two types of terms about $\bm{L}_{\bm{\theta}}$: linear and quadratic terms, except for the regularizer.
Since $\bm{L}_{\bm{\theta}}$ is defined as \eqref{eq:l_structure}, and the kernel function has the feature map representation \eqref{eq:feature_map},
the derivative of the third term of the surrogate becomes:
\begin{align}
    \label{eq:derivative_denom}
    \begin{aligned}
        \nabla_{q_i} \tr\{(\bm{I}+\bm{L}_t)^{-1} \bm{L}_{\bm{\theta}}\}
        & = 2 \mathbb{E}_{\bm{\omega}}[\bm{\psi}_{\bm{\omega}}(\bm{x}_i)^\top [\bm{F}^t (\bm{\omega})]_{i, :}],\\
        \nabla_{x_{i,\ell}} \tr\{(\bm{I}+\bm{L}_t)^{-1} \bm{L}_{\bm{\theta}}\}
        & = 2 q_i \mathbb{E}_{\bm{\omega}} \left [ \left (\frac{\partial \bm{\psi}_{\bm{\omega}}(\bm{x}_i)}{\partial x_{i,\ell}} \right )^\top [\bm{F}^t(\bm{\omega})]_{i, :} \right ],
    \end{aligned}
\end{align}
where
\begin{align}
    \bm{F}^t(\bm{\omega}) \coloneq (\bm{I} + \bm{L}_t)^{-1} \bm{Q} \bm{\Psi}_{\bm{\omega}}
\end{align}
and $[\cdot]_{i, :}$ denotes the $i$-th row of the matrix inside.

By the random features with independent parameters $\bm{\omega}, \bm{\nu} \sim p$,
the gradients of the second term of \eqref{eq:surrogate} are given by:
\begin{align}
    \label{eq:derivative_numer_quad}
    \begin{aligned}
        &\nabla_{q_i} \left ( -\frac{\beta_\rho}{2} \lVert \bm{E}_{t,m} \rVert^2_F\right )
        = \left \{
        \begin{aligned}
            &-2\beta_\rho \mathbb{E}_{\bm{\omega}, \bm{\nu}}[
            \bm{\psi}_{\bm{\omega}}(\bm{x}_i)^\top
            [\bm{G}^{t,m}_{\bm{\theta}}(\bm{\omega}, \bm{\nu})]_{r_m(i), :}
            ] && (\mbox{for $i \in \cA_m$}),\\
            &\quad 0 && (\mbox{for $i \notin \cA_m$}),
        \end{aligned}
        \right .
        \\
        &\nabla_{x_{i,\ell}} \left ( -\frac{\beta_\rho}{2} \lVert \bm{E}_{t,m} \rVert^2_F\right )
        = \left \{
        \begin{aligned}
            &-2\beta_\rho q_i \mathbb{E}_{\bm{\omega}, \bm{\nu}}\left [
            \left (\frac{\partial\bm{\psi}_{\bm{\omega}}(\bm{x}_i)}{\partial x_{i, \ell}} \right )^\top
            [\bm{G}^{t,m}_{\bm{\theta}}(\bm{\omega}, \bm{\nu})]_{r_m(i), :}
            \right ] && (\mbox{for $i \in \cA_m$}),\\
            &\quad 0 && (\mbox{for $i \notin \cA_m$}),
        \end{aligned}
        \right .
    \end{aligned}
\end{align}
where
\begin{align}
    \bm{G}^{t,m}_{\bm{\theta}}(\bm{\omega}, \bm{\nu})
    \coloneq \bm{L}_t[\cA_m]^{-1}\bm{\Delta}^{t,m}_{\bm{\theta}, \bm{\nu}} \bm{L}_t[\cA_m]^{-1}\bm{Q}[\cA_m] \bm{\Psi}_{\bm{\omega}, \cA_m}.
\end{align}

Similarly, we have
\begin{align}
    \label{eq:derivative_numer_linear}
    \begin{aligned}
        &\nabla_{q_i} \tr \bm{E}_{t,m}
        = \left \{
        \begin{aligned}
            &~2 \mathbb{E}_{\bm{\omega}}[\bm{\psi}_{\bm{\omega}}(\bm{x}_i)^\top [\bm{H}^{t,m}(\bm{\omega})]_{r_m(i), :}] && (\mbox{for $i \in \cA_m$}),\\
            &\quad 0 && (\mbox{for $i \notin \cA_m$}),
        \end{aligned}
        \right .\\
        &\nabla_{x_{i,\ell}} \tr \bm{E}_{t,m}
        = \left \{
        \begin{aligned}
            &~2 q_i \mathbb{E}_{\bm{\omega}}\left [ \left (\frac{ \partial \bm{\psi}_{\bm{\omega}}(\bm{x}_i)}{\partial x_{i,\ell}}\right )^\top [\bm{H}^{t,m}(\bm{\omega})]_{r_m(i), :} \right] && (\mbox{for $i \in \cA_m$}),\\
            &\quad 0 && (\mbox{for $i \notin \cA_m$}),
        \end{aligned}
        \right .
    \end{aligned}
\end{align}
for the gradients of the first term of \eqref{eq:surrogate}, where
\begin{align}
    \bm{H}^{t,m}(\bm{\omega}) \coloneq \bm{L}_t[\cA_m]^{-1} \bm{Q}[\cA_m] \bm{\Psi}_{\bm{\omega}, \cA_m}.
\end{align}

\section{Missing Proofs}\label{app:proof}
For notational simplicity, we assume that the $t$-th parameter $\bm{\theta}_t$ is already fixed and omit the iteration index $t$ in this section.

\subsection{Proof of Lemma \ref{lem:gap_bound}}\label{app:proof_lemma}
\begin{proof}
    We define $\widehat{\bm{S}} \coloneq (\bm{I}+\widehat{\bm{L}})^{-1}$ and $\bm{\Delta} \coloneq \bm{L} - \widehat{\bm{L}}$.
    Then, we have
    \begin{align}
        f_{\mathrm{den}}(\bm{L})
        &= - \log\det(\bm{I} + \bm{L})\\
        &= - \log\det(\bm{I}+\widehat{\bm{L}}+\bm{\Delta} )\\
        &= - \log\det((\bm{I}+\widehat{\bm{L}})(\bm{I}+\widehat{\bm{S}}\bm{\Delta}))\\
        &= - \log\det(\bm{I}+\widehat{\bm{L}}) - \log\det(\bm{I}+\widehat{\bm{S}}\bm{\Delta})
    \end{align}
    and the gap $\gamma$ is written as
    \begin{align}
        \gamma
        = \ell_{\lambda}(\bm{L}) - \hat{g}_{\lambda}(\bm{L}; \bm{L}, \widehat{\bm{L}})
        = -\log\det(\bm{I}+\widehat{\bm{S}}\bm{\Delta})+ \tr(\widehat{\bm{S}}\bm\Delta)
        = \sum^N_{i=1} ( \mu_i - \log (1 + \mu_i)),
    \end{align}
    where $\mu_i$ denotes the $i$-th eigenvalue of $\widehat{\bm{S}}^{\frac{1}{2}} \bm{\Delta} \widehat{\bm{S}}^{\frac{1}{2}}$.
    Note that $\bm{\Delta} \succeq 0$ and $\mu_i \geq 0$ always holds from the inequalities $0 \preceq \widehat{\bm{L}} \preceq \bm{L}$ because the Nystr{\"o}m approximation is induced by an orthogonal projection.
    Since
    \begin{align}
        x - \log(1+x) = \int^x_0 \frac{u}{1+u} du \leq \int^x_0 u du = \frac{x^2}{2},
    \end{align}
    holds for $x \geq 0$, 
    \begin{align}
        \gamma
        = \sum^N_{i=1} ( \mu_i - \log (1 + \mu_i)) \leq \sum^N_{i=1}\frac{\mu_i^2}{2}
        = \frac{1}{2} \lVert \widehat{\bm{S}}^{\frac{1}{2}} \bm{\Delta} \widehat{\bm{S}}^{\frac{1}{2}}\rVert^2_F
        = \frac{1}{2} \lVert (\bm{I} + \widehat{\bm{L}})^{-\frac{1}{2}} (\bm{L} - \widehat{\bm{L}}) (\bm{I} + \widehat{\bm{L}})^{-\frac{1}{2}}\rVert^2_F.
    \end{align}
\end{proof}

\subsection{Proof of Theorem \ref{thm:ex_gap_bound}}
In Section \ref{sec:methodology}, we introduced the quality-weighted Nystr{\"o}m method, in which $\bm{M} = \bm{Q} \bm{\Sigma} \bm{Q}$ is approximated as
$\bm{M} \approx \widehat{\bm{M}} = \bm{M}^\top_{\cJ, :}\bm{M}[\cJ]^+ \bm{M}_{\cJ, :}$.
Here, the indices of $\cY$ are sampled by following the probabilities $p_j$ without replacement, and $\cJ = \{j_1, \ldots, j_{r_{\mathrm{Nys}}}\}\subseteq \cY$ is the obtained set of sampled indices.
We first check that the bound \eqref{eq:ex_gap_bound} is valid for sampling with replacement, and then show that the same bound also applies to sampling without replacement.

\begin{lemma}\label{lem:ex_gap_bound_wr}
    Suppose that the $N \times N$ matrix $\bm{\Sigma}$ is positive definite and the diagonal terms are $\Sigma_{ii} = k_0$,
    $\bm{Q} = \mathrm{diag}(q_1, \ldots, q_N) \succeq 0$,
    and $\bm{L} = \varepsilon \bm{I} + \bm{Q}\bm{\Sigma}\bm{Q} = \varepsilon \bm{I} + \bm{M}$.
    Let $p_j = q^2_j / \lVert \bm{Q} \rVert^2_F$ be the sampling probability of $j \in \cY$
    and $\cJ = \{j_1, \ldots, j_r\} \subseteq \cY$ be the sampled set,
    where $j_i$ is the $i$-th independent sample from $\cY = \{1, \ldots, N\}$ with replacement.
    In addition, we denote the Nystr{\"o}m approximation of $\bm{M}$ under $\cJ$ as $\widehat{\bm{M}}$ and $\widehat{\bm{L}} \coloneq \varepsilon \bm{I} + \widehat{\bm{M}}$.

    Then, the expected gap admits
    \begin{align}
        \mathbb{E}[\gamma] \leq \frac{k^2_0 \lVert \bm{Q} \rVert^4_F}{2(1+\varepsilon)^2 r}.
    \end{align}
\end{lemma}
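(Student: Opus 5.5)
The plan is to start from Lemma~\ref{lem:gap_bound}, which gives $\gamma \le \frac12 \lVert \widehat{\bm S}^{1/2}\bm\Delta\widehat{\bm S}^{1/2}\rVert_F^2$ with $\widehat{\bm S}=(\bm I+\widehat{\bm L})^{-1}$ and $\bm\Delta=\bm L-\widehat{\bm L}=\bm M-\widehat{\bm M}\succeq 0$. Since $\widehat{\bm M}\succeq 0$, we have $\bm I+\widehat{\bm L}\succeq (1+\varepsilon)\bm I$, so $\widehat{\bm S}\preceq (1+\varepsilon)^{-1}\bm I$. For a PSD $\bm\Delta$ and a PSD $\widehat{\bm S}$ with $\widehat{\bm S}\preceq c\bm I$,
\[
\lVert \widehat{\bm S}^{1/2}\bm\Delta\widehat{\bm S}^{1/2}\rVert_F^2=\tr(\widehat{\bm S}\bm\Delta\widehat{\bm S}\bm\Delta)\le c^2\tr(\bm\Delta^2).
\]
Indeed, $\tr(\widehat{\bm S}\bm\Delta\widehat{\bm S}\bm\Delta)=\tr\bigl(\bm\Delta^{1/2}\widehat{\bm S}\bm\Delta\widehat{\bm S}\bm\Delta^{1/2}\bigr)$, and one applies $\widehat{\bm S}\preceq c\bm I$ twice via congruences. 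This reduces the claim to
\[
\mathbb E\lVert \bm M-\widehat{\bm M}\rVert_F^2\le \frac{k_0^2\lVert\bm Q\rVert_F^4}{r}.
\]
Using $\lVert\bm\Delta\rVert_F\le\tr\bm\Delta$ for PSD $\bm\Delta$, it even suffices to bound $\mathbb E[(\tr(\bm M-\widehat{\bm M}))^2]$ or a related quantity.

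Next I would write $\bm M=\bm B^\top\bm B$, where $\bm B$ is a (possibly infinite-dimensional, or $N$-row via Cholesky) factor whose columns $\bm b_i$ satisfy $\lVert\bm b_i\rVert^2=M_{ii}=k_0q_i^2$. The Nystr\"om approximation is then $\widehat{\bm M}=\bm B^\top\bm P_{\cJ}\bm B$, with $\bm P_{\cJ}$ the orthogonal projection onto $\mathrm{span}\{\bm b_j:j\in\cJ\}$. Hence $\bm M-\widehat{\bm M}=\bm B^\top(\bm I-\bm P_{\cJ})\bm B\succeq0$. The sampling probabilities $p_j=q_j^2/\lVert\bm Q\rVert_F^2=\lVert\bm b_j\rVert^2/\tr\bm M$ are exactly length-squared (norm) sampling, so I would invoke the Drineas--Mahoney/Frieze--Kannan--Vempala argument. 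The rescaled sketch $\bm C\bm C^\top=\frac1r\sum_{s=1}^r \bm b_{j_s}\bm b_{j_s}^\top/p_{j_s}$ is an unbiased estimator of $\bm B\bm B^\top$, with
\[
\mathbb E\lVert \bm C\bm C^\top-\bm B\bm B^\top\rVert_F^2\le \frac1r\sum_j \frac{\lVert\bm b_j\rVert^4}{p_j}=\frac{(\tr\bm M)^2}{r}=\frac{k_0^2\lVert\bm Q\rVert_F^4}{r}.
\]

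The main obstacle is the step connecting the projection residual to this sketch error. Since $\bm C$ has columns in $\mathrm{span}\{\bm b_j\}_{j\in\cJ}$, I would use the standard inequality
\[
\lVert \bm B^\top(\bm I-\bm P_{\cJ})\bm B\rVert_F=\lVert(\bm I-\bm P_{\cJ})\bm B\bm B^\top(\bm I-\bm P_{\cJ})\rVert_F\le\lVert(\bm I-\bm P_{\cJ})(\bm B\bm B^\top-\bm C\bm C^\top)(\bm I-\bm P_{\cJ})\rVert_F\le\lVert\bm B\bm B^\top-\bm C\bm C^\top\rVert_F.
\]
The first equality holds because both matrices share their nonzero spectra. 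The middle inequality holds because $(\bm I-\bm P_{\cJ})\bm C=0$. The last uses that projections are contractive in Frobenius norm. Squaring, taking expectations, and combining with the first paragraph gives
\[
\mathbb E[\gamma]\le\frac{1}{2(1+\varepsilon)^2}\cdot\frac{k_0^2\lVert\bm Q\rVert_F^4}{r}.
\]
I expect care to be needed at this step: the estimator must be handled in a feature space where it is genuinely unbiased (so that the variance computation yields exactly $(\tr\bm M)^2/r$ minus a nonnegative term), and zero-quality items must be handled, which are harmless since they have $p_j=0$ and $\bm b_j=0$.
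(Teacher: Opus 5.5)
Your proposal is correct and is essentially the paper's proof. The paper writes $\bm M=\bm V^\top\bm V$ and compares $\bm C=\bm V\bm V^\top$ with the length-squared sketch $\widehat{\bm C}=r^{-1}\sum_s \bm v_{j_s}\bm v_{j_s}^\top/p_{j_s}$ (your $\bm C\bm C^\top$), obtaining $\mathbb E\lVert\bm C-\widehat{\bm C}\rVert_F^2\le k_0^2\lVert\bm Q\rVert_F^4/r$. It then uses $\lVert\bm\Delta\rVert_F=\lVert\bm\Pi_{\cJ}(\bm C-\widehat{\bm C})\bm\Pi_{\cJ}\rVert_F\le\lVert\bm C-\widehat{\bm C}\rVert_F$, since the sketch is annihilated by the complementary projection, and concludes via Lemma~\ref{lem:gap_bound} with $(\bm I+\widehat{\bm L})^{-1}\preceq(1+\varepsilon)^{-1}\bm I$. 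The differences are cosmetic: you perform the $(1+\varepsilon)^{-2}$ reduction first, and your middle ``$\le$'' in the projection chain is in fact an equality.
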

\begin{proof}
    Since $\bm{M} \succeq 0$, there exist $m \in \mathbb{N}$ and $\bm{V} \in \mathbb{R}^{m \times N}$ such that $\bm{M} = \bm{V}^\top\bm{V}$.
    Let $\bm{C}$ be the dual of $\bm{M}$, i.e., $\bm{C} = \bm{V}\bm{V}^\top \in \mathbb{R}^{m \times m} = \sum^N_{i=1} \bm{v}_i \bm{v}^\top_i$, where $\bm{v}_i$ is the $i$-th column vector of $\bm{V}$.
    We additionally introduce $\bm{Y}_s \coloneq \bm{v}_{j_s} \bm{v}_{j_s}^\top / p_{j_s}$ and $\widehat{\bm{C}} \coloneq r^{-1} \sum^r_{s=1} \bm{Y}_s$ for the $s$-th sampled index $j_s$.
    Clearly, we have $\mathbb{E}[\bm{Y}_s] = \mathbb{E}[\widehat{\bm{C}}] = \bm{C}$.
    We then have
    \begin{align}
        \mathbb{E} \lVert \bm{C} - \widehat{\bm{C}} \rVert^2_F
        &= \mathbb{E} \left \lVert \frac{1}{r} \sum^r_{s=1} (\bm{C} - \bm{Y}_s) \right \rVert^2_F \\
        &= \frac{1}{r^2} \sum^r_{s=1} \mathbb{E} \lVert \bm{C} - \bm{Y}_s \rVert^2_F\\
        &= \frac{1}{r} \mathbb{E} \lVert \bm{C} - \bm{Y}_1 \rVert^2_F\\
        &= \frac{1}{r} (\mathbb{E}\lVert\bm{Y}_1 \rVert^2_F - \lVert \bm{C} \rVert^2_F)\\
        &\leq \frac{1}{r} \mathbb{E}\lVert\bm{Y}_1 \rVert^2_F\\
        &= \frac{1}{r} \sum^N_{i=1} \frac{\lVert \bm{v}_i \rVert^4}{p_i}\\
        \label{eq:proof_lemma_1}
        &= \frac{k^2_0 \lVert \bm{Q} \rVert^4_F}{r},
    \end{align}
    where we used $M_{jj} = \lVert \bm{v}_j \rVert^2 = k_0 q_j^2$, which implies $\sum_i \lVert \bm{v}_i \rVert^2 = k_0 \lVert \bm{Q} \rVert^2_F$ and $p_j = \lVert \bm{v}_j \rVert^2 / \sum_i \lVert\bm{v}_i\rVert^2$.

    Let $\bm{P}_{\cJ}$ be the projection matrix of $\mathrm{span}(\{\bm{v}_j\}_{j \in \cJ})$ and $\bm{\Pi}_{\cJ}$ be the projection matrix of the orthogonal complement: $\bm{P}_{\cJ} \coloneq \bm{V}_{:, \cJ} (\bm{V}^\top_{:, \cJ} \bm{V}_{:, \cJ})^+ \bm{V}^\top_{:, \cJ}$ and $\bm{\Pi}_{\cJ} \coloneq \bm{I} - \bm{P}_{\cJ}$.
    Then, $\bm{V}^\top \bm{P}_{\cJ} \bm{V} = \widehat{\bm{M}}$ and
    \begin{align}
        \bm{V}^\top \bm{\Pi}_{\cJ} \bm{V} = \bm{V}^\top \bm{V} - \bm{V}^\top\bm{P}_{\cJ}\bm{V} = \bm{M} - \widehat{\bm{M}} =\bm{\Delta}.
    \end{align}
    This admits
    \begin{align}
        \lVert \bm{\Delta} \rVert^2_F = \tr \{ \bm{V}^\top \bm{\Pi}_{\cJ} \bm{V}\bm{V}^\top \bm{\Pi}_{\cJ} \bm{V}\}
        = \tr \{ \bm{V}^\top \bm{\Pi}_{\cJ} \bm{\Pi}_{\cJ} \bm{V}\bm{V}^\top \bm{\Pi}_{\cJ} \bm{\Pi}_{\cJ} \bm{V}\}
        = \tr \{ \bm{\Pi}_{\cJ} \bm{C} \bm{\Pi}_{\cJ}\bm{\Pi}_{\cJ} \bm{C} \bm{\Pi}_{\cJ}\}
        = \lVert \bm{\Pi}_{\cJ} \bm{C} \bm{\Pi}_{\cJ}\rVert^2_F.
    \end{align}
    Thus, we have
    \begin{align}
        \label{eq:proof_lemma_2}
        \lVert \bm{\Delta} \rVert^2_F = \lVert \bm{\Pi}_{\cJ} \bm{C} \bm{\Pi}_{\cJ}\rVert^2_F
        = \lVert \bm{\Pi}_{\cJ} (\bm{C} - \widehat{\bm{C}}) \bm{\Pi}_{\cJ}\rVert^2_F
        \leq \lVert \bm{\Pi}_{\cJ} \rVert^4_{\mathrm{op}} \lVert \bm{C} - \widehat{\bm{C}} \rVert^2_F
        = \lVert \bm{C} - \widehat{\bm{C}} \rVert^2_F,
    \end{align}
    where we used $\bm{\Pi}_{\cJ} \widehat{\bm{C}} \bm{\Pi}_{\cJ} = \bm{O}$ and the fact that all the eigenvalues of a projection matrix are $0$ or $1$.

    According to the proof of Lemma \ref{lem:gap_bound} (i.e., Section \ref{app:proof_lemma}), we define $\widehat{\bm{S}} \coloneq (\bm{I}+\widehat{\bm{L}})^{-1}$ and $\bm{\Delta} \coloneq \bm{L} - \widehat{\bm{L}} \succeq 0$.
    From Lemma \ref{lem:gap_bound} and $0 \preceq \widehat{\bm{S}}^{\frac{1}{2}} = (\bm{I} + \widehat{\bm{L}})^{-\frac{1}{2}} \preceq (1 + \varepsilon)^{-\frac{1}{2}} \bm{I}$,
    we have
    \begin{align}
        \gamma \leq \frac{1}{2} \lVert \widehat{\bm{S}}^{\frac{1}{2}} \bm{\Delta} \widehat{\bm{S}}^{\frac{1}{2}}\rVert^2_F
        \leq \frac{1}{2} \lVert \widehat{\bm{S}}^{\frac{1}{2}} \rVert^4_{\mathrm{op}} \lVert \bm{\Delta} \rVert^2_F
        \leq \frac{\lVert \bm{\Delta} \rVert^2_F}{2 (1 + \varepsilon)^2},
    \end{align}
    where we used $\lVert \bm{A} \rVert^2_F \leq \lVert \bm{B} \rVert^2_F $ for any $0 \preceq \bm{A} \preceq \bm{B}$ for the final inequality,
    because $\lVert \bm{B} \rVert^2_F - \lVert \bm{A} \rVert^2_F = \tr(\bm{B}^2 - \bm{A}^2) = \tr\{(\bm{B} - \bm{A})(\bm{B} + \bm{A})\} \geq 0$.
    Thus, from \eqref{eq:proof_lemma_1} and \eqref{eq:proof_lemma_2},
    \begin{align}
        \mathbb{E}[\gamma] \leq \frac{\mathbb{E} \lVert \bm{\Delta} \rVert^2_F}{2 (1 + \varepsilon)^2}
        \leq \frac{\mathbb{E} \lVert \bm{C} - \widehat{\bm{C}} \rVert^2_F}{2 (1 + \varepsilon)^2}
        \leq \frac{k^2_0 \lVert \bm{Q} \rVert^4_F}{2 (1 + \varepsilon)^2 r},
    \end{align}
    and the lemma is confirmed.
\end{proof}

We now consider sampling without replacement.
Mathematically, weighted sampling without replacement from $\cY$ is realized in the following way \citep{ben-hamou2018}.

\begin{minipage}{\textwidth}
    \begin{tcolorbox}[standard jigsaw, opacityback=0]
        \begin{enumerate}
            \item Repeat sampling with replacement from $\cY$ according to the probabilities $p_1, \ldots, p_N$, and make an infinitely long tuple $\mathcal{I} \coloneq (i_1, i_2, \ldots)$.
            \item Create $\cJ = \{j_1, \ldots, j_r\}$ by extracting $r$ distinct indices from $\mathcal{I}$, excluding those already chosen from the beginning.
        \end{enumerate}
    \end{tcolorbox}
    \captionof{proc}{Weighted sampling without replacement.}
    \label{proc:wor}
\end{minipage}
This procedure yields the conditional probability
\begin{align}
    \label{eq:wor_conditional_prob}
    P(j_i ~|~ j_1, \ldots, j_{i-1}) = \frac{p_{j_i}}{1 - \sum_{\ell < i} p_{j_\ell}} \quad (i \notin \{j_1, \ldots, j_{i-1}\}).
\end{align}

\begin{theorem}[Restatement of Theorem \ref{thm:ex_gap_bound}]\label{thm:ex_gap_bound_wor}
    Consider the quality-weighted Nystr{\"o}m method with sampling without replacement, and assume all other settings are the same as in Lemma \ref{lem:ex_gap_bound_wr}.
    Then, we have the same bound \eqref{eq:ex_gap_bound} for the expected anchor gap $\mathbb{E}[\gamma]$.
\end{theorem}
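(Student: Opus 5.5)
The plan is to reduce the without-replacement case to Lemma \ref{lem:ex_gap_bound_wr} by coupling the two sampling schemes through Procedure \ref{proc:wor}. We do not analyze the conditional probabilities \eqref{eq:wor_conditional_prob} directly. Instead, we realize both index sets on the same probability space.

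\textbf{Coupling.} Generate the infinite i.i.d.\ tuple $\mathcal{I}=(i_1,i_2,\ldots)$ with law $p$. Let $\cJ'=\{i_1,\ldots,i_r\}$ be the with-replacement sample, which is exactly the object treated in Lemma \ref{lem:ex_gap_bound_wr}. Let $T\geq r$ be the first time at which $r$ distinct indices have appeared, and let $\cJ$ be the set of these $r$ distinct indices. By \citet{ben-hamou2018}, $\cJ$ has the law of weighted sampling without replacement. If all $q_j>0$ and $r\leq N$, then $T<\infty$ almost surely.

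\textbf{Key observation.} Every index among $i_1,\ldots,i_r$ lies in $\cJ$, since $\cJ$ collects all distinct indices of $(i_1,\ldots,i_T)$ and $T\geq r$. Therefore
\begin{align}
\mathrm{span}\{\bm v_j\}_{j\in\cJ'}\subseteq\mathrm{span}\{\bm v_j\}_{j\in\cJ},
\end{align}
so $\bm P_{\cJ'}\preceq\bm P_{\cJ}$, equivalently $\bm\Pi_{\cJ}\preceq\bm\Pi_{\cJ'}$. Using the identity $\bm\Delta=\bm V^\top\bm\Pi_{\cJ}\bm V$ from the proof of Lemma \ref{lem:ex_gap_bound_wr}, we get
\begin{align}
0\preceq\bm\Delta_{\cJ}=\bm V^\top\bm\Pi_{\cJ}\bm V\preceq\bm V^\top\bm\Pi_{\cJ'}\bm V=\bm\Delta_{\cJ'}.
\end{align}
The Frobenius monotonicity fact already proved, namely $\lVert\bm A\rVert_F\leq\lVert\bm B\rVert_F$ for $0\preceq\bm A\preceq\bm B$, then gives $\lVert\bm\Delta_{\cJ}\rVert_F^2\leq\lVert\bm\Delta_{\cJ'}\rVert_F^2$ pointwise on the coupled space.

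\textbf{Assembling the bound.} First, apply Lemma \ref{lem:gap_bound} together with $(\bm I+\widehat{\bm L})^{-1}\preceq(1+\varepsilon)^{-1}\bm I$, which holds for any Nystr\"om anchor. This yields
\begin{align}
\gamma\leq\frac{\lVert\bm\Delta_{\cJ}\rVert_F^2}{2(1+\varepsilon)^2}.
\end{align}
Next, use the pointwise comparison $\lVert\bm\Delta_{\cJ}\rVert_F^2\leq\lVert\bm\Delta_{\cJ'}\rVert_F^2$. Then take expectations, and finally use \eqref{eq:proof_lemma_2} and \eqref{eq:proof_lemma_1} for $\cJ'$, which bound $\mathbb{E}\lVert\bm\Delta_{\cJ'}\rVert_F^2$ by $k_0^2\lVert\bm Q\rVert_F^4/r$. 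Together these give \eqref{eq:ex_gap_bound}.

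\textbf{Main obstacle.} The delicate point is making the coupling rigorous. We must confirm that the set extracted in Procedure \ref{proc:wor} has exactly the without-replacement law, and handle degenerate cases: some $q_j=0$, or fewer than $r$ items with positive weight. In those cases we would either restrict to items with positive weight or note that $\cJ$ then spans all $\bm v_j$ with nonzero norm, so $\bm\Delta=0$ and the bound holds trivially. The projection-monotonicity step is routine once the inclusion $\cJ'\subseteq\cJ$ is established.
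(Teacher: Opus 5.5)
Your proposal is correct and follows essentially the same route as the paper. Both couple the two schemes through Procedure~\ref{proc:wor}, use that the distinct indices among the first $r$ draws lie in $\cJ$ to get $\bm{\Delta}_{\mathrm{wor}} \preceq \bm{\Delta}_{\mathrm{wr}}$, and then chain Lemma~\ref{lem:gap_bound}, Frobenius monotonicity, and the with-replacement bound of Lemma~\ref{lem:ex_gap_bound_wr}. Your treatment of the degenerate cases ($q_j=0$, or fewer than $r$ items with positive weight) is a small addition that the paper leaves implicit.
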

\begin{proof}
    From Procedure \ref{proc:wor}, we obtain $\cJ$ and define $\mathcal{I}_r$ as an $r$-tuple with the first $r$ elements of $\mathcal{I}$.
    We use $\mathcal{I}_r$ and $\cJ$ for the Nystr{\"o}m approximations with and without replacement, respectively.
    Let us define $\widehat{\bm{M}}_{\mathrm{wr}} \coloneq \bm{M}^\top_{\mathcal{I}_r, :}\bm{M}[\mathcal{I}_r]^+ \bm{M}_{\mathcal{I}_r, :}$ as the approximated kernel matrix obtained by sampling with replacement, $\widehat{\bm{M}}_{\mathrm{wor}} \coloneq \bm{M}^\top_{\cJ, :}\bm{M}[\cJ]^+ \bm{M}_{\cJ, :}$ as the one obtained via sampling without replacement, and $\bm{\Delta}_{\mathrm{wr}} \coloneq \bm{M} - \widehat{\bm{M}}_{\mathrm{wr}}, \bm{\Delta}_{\mathrm{wor}} \coloneq \bm{M} - \widehat{\bm{M}}_{\mathrm{wor}}$.
    Note that $\widehat{\bm{M}}_{\mathrm{wr}}$ is invariant to the existence of duplicated indices in $\mathcal{I}_r$ because they do not expand the range of the underlying projection matrix.

    Since the set of distinct elements of $\mathcal{I}_r$ is a subset of $\cJ$, $\widehat{\bm{M}}_{\mathrm{wr}} \preceq \widehat{\bm{M}}_{\mathrm{wor}}$ holds, and $\bm{\Delta}_{\mathrm{wor}} \preceq \bm{\Delta}_{\mathrm{wr}}$ follows.
    Recalling Lemma \ref{lem:ex_gap_bound_wr} and that the quadratic bound of the anchor gap \eqref{eq:gap_bound} holds regardless of the sampling procedure,
    we have
    \begin{align}
        \mathbb{E}[\gamma]
        \leq \frac{\mathbb{E} \lVert \bm{\Delta}_{\mathrm{wor}} \rVert^2_F}{2 (1 + \varepsilon)^2}
        \leq \frac{\mathbb{E} \lVert \bm{\Delta}_{\mathrm{wr}} \rVert^2_F}{2 (1 + \varepsilon)^2}
        \leq \frac{k^2_0 \lVert \bm{Q} \rVert^4_F}{2 (1 + \varepsilon)^2 r},
    \end{align}
\end{proof}
Note that the same conditional distribution \eqref{eq:wor_conditional_prob} is also realized by the Gumbel top-$r$ method \citep{kool2019}, and it may be more suitable for practical use.

\section{Experimental Configurations and Additional Results}\label{app:exp}
\subsection{Detailed Experimental Settings}\label{app:exp_details}
This section provides the detailed configurations omitted from Section \ref{sec:ex}.

\textbf{Datasets:} The full \ttM dataset contains $\num{87585}$ movies rated by over $\num{200000}$ users, with each user rating at least $20$ movies.
The \ttI dataset comprises $\num{49688}$ grocery products across over $\num{3000000}$ orders from over $\num{200000}$ users.
When sweeping the ground set size $N$ by popularity, the exact sizes are $N \in \{\num{3000}, \num{5000}, \num{10000}, \num{30000}, \num{50000}, N_{\max}\}$ for \ttM, and $N \in \{\num{3000}, \num{5000}, \num{10000}, \num{20000}, N_{\max}\}$ for \ttI.

\textbf{Number of Random Features:} For the Monte Carlo gradient estimation in KS-DPPs, the number of sampled random features is set to $2$ for the observation-dependent terms and $4$ for the term originating from the normalizing constant.

\textbf{Evaluation Procedure:} For large $N$, evaluating exact validation and test mean log-likelihoods is computationally difficult for KS-DPPs.
To ensure the fairness of the comparison, we compute the observation-dependent terms exactly and approximate the observation-independent term $f_{\mathrm{den}}$ using the quality-weighted Nystr{\"o}m method with a sufficiently large rank $r_{\mathrm{Nys}} = 512$.
We empirically observe that the bias induced by the rank-$512$ quality-weighted Nystr{\"o}m approximation is less than $1 \%$ of the exact value for synthetic data.
See the next section on this experiment.

\textbf{Computing Infrastructure:} All experiments were executed in Julia 1.12.6 on an Ubuntu 24.04 system equipped with an Intel Core i9-9980XE CPU ($3.00$ GHz, 18 cores) and $128$ GB of RAM.

\begin{figure*}
    \centering
    \begin{minipage}{0.43\textwidth}
        \includegraphics[width=\columnwidth]{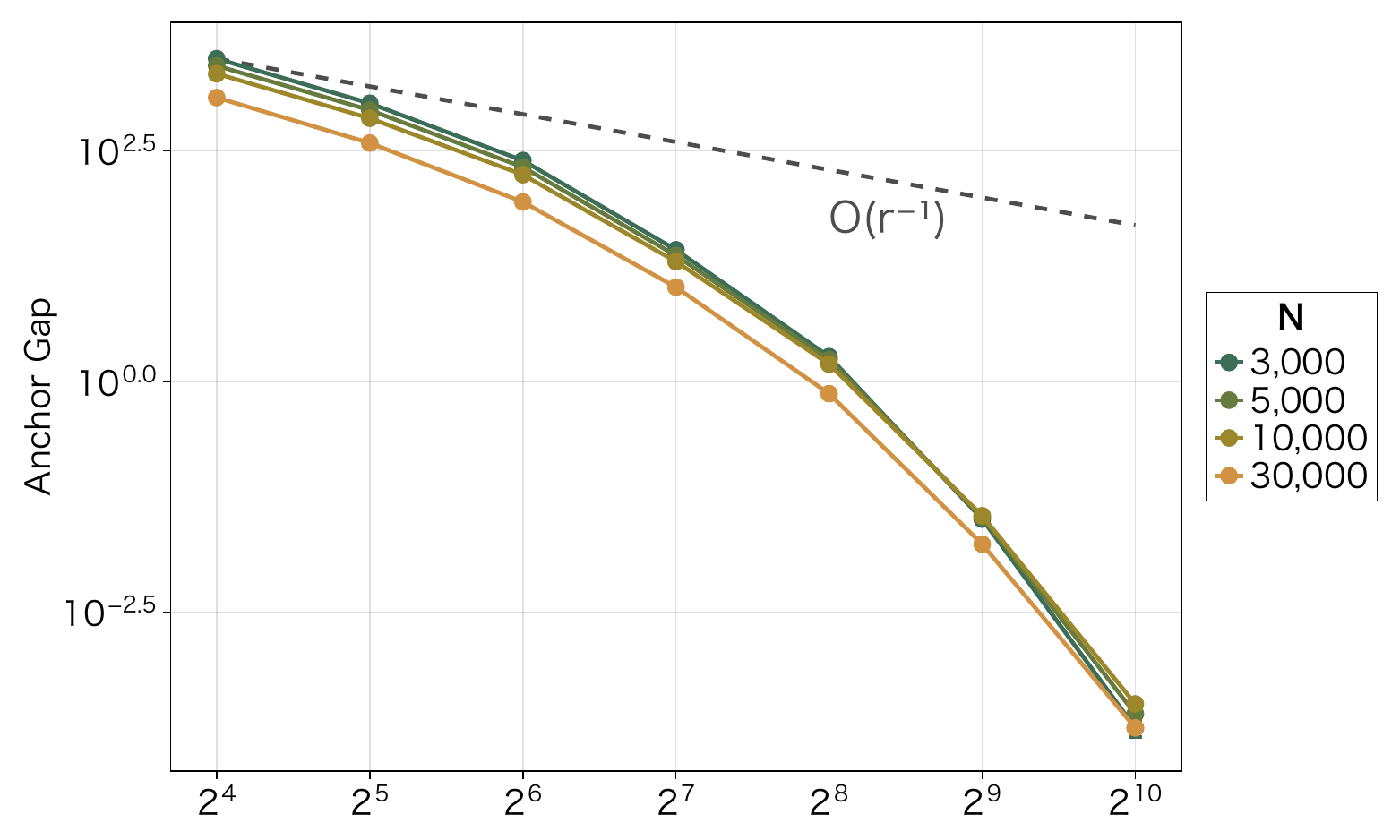}
        \caption{Measurement of the expected anchor gap $\bm{E}[\gamma | \bm{\theta}]$.} 
        \label{fig:anchor_gap}
    \end{minipage}
    \hspace{0.03\textwidth}
    \begin{minipage}{0.51\textwidth}
        \includegraphics[width=\columnwidth]{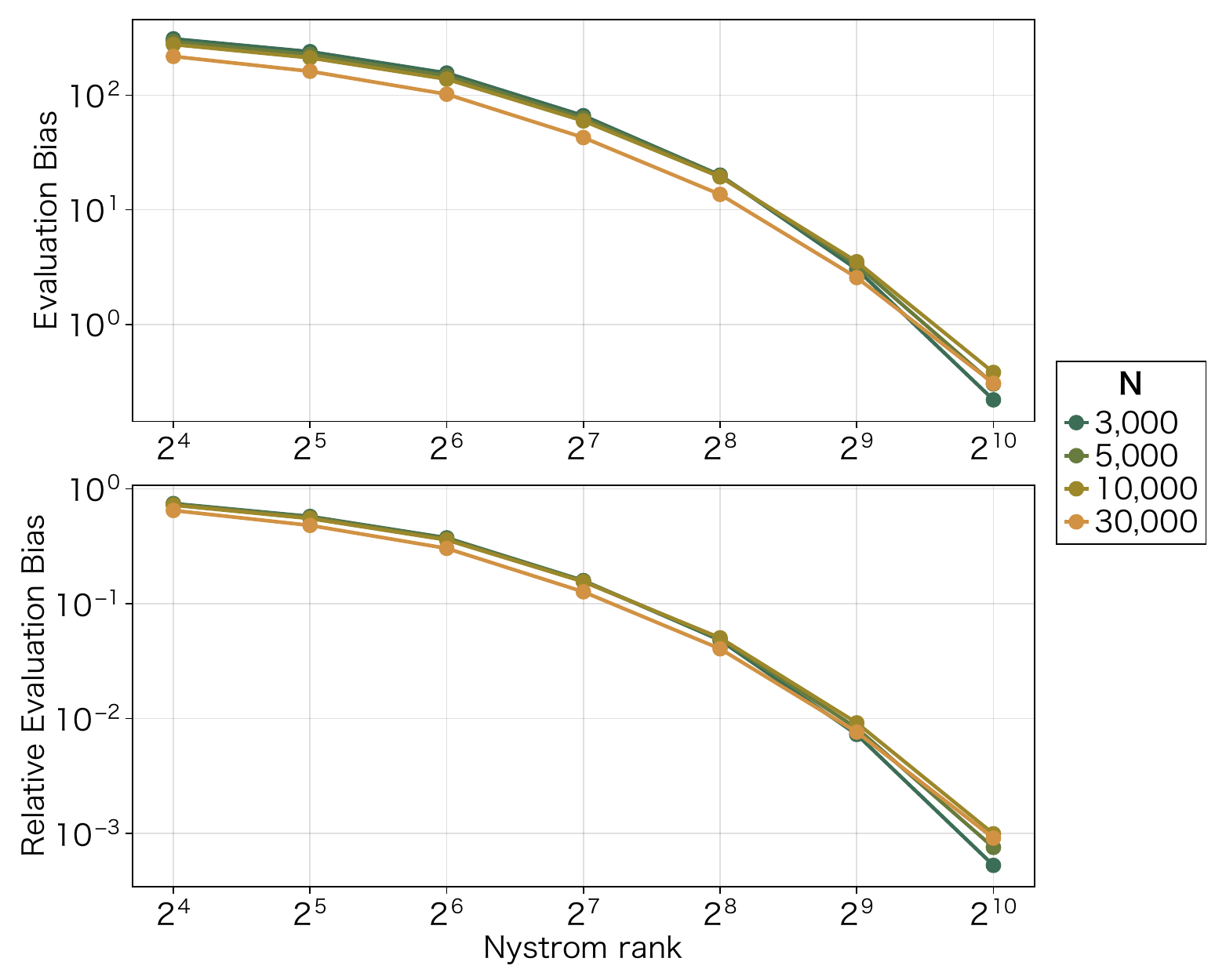}
        \caption{Measurement of the evaluation error of $f_{\mathrm{den}}$.}
        \label{fig:eval_gap}
    \end{minipage}
\end{figure*}
\subsection{Rank Dependency of Quality-Weighted Nystr{\"o}m Approximation}
We empirically evaluate the approximation error induced by the quality-weighted Nystr{\"o}m method to (i) verify the theoretical decay rate of the expected anchor gap (Theorem \ref{thm:ex_gap_bound}) and (ii) confirm that $r_{\mathrm{Nys}} = 512$ is sufficient for the held-out evaluations in Section \ref{sec:ex}.
We generate synthetic data mimicking real-world popularity skew as follows.
First, we set $d=8$, meaning that the latent dimensionality is $d-1=7$, and independently sample $N$ latent features $\bm{x}_1, \ldots, \bm{x}_N \sim \mathcal{N}(\bm{0}, (d-1)^{-1}\bm{I})$.
The kernel matrix $\bm{\Sigma}$ is constructed using these latent features and a Gaussian kernel with its bandwidth determined by the median heuristic.
Next, we sample item qualities $q_1, \ldots, q_N$ from a log-normal distribution such that the expected subset size is $\mathbb{E}[|\cA|] \approx 150$.
Specifically, given i.i.d. standard Gaussian samples $z_1, \ldots, z_N \sim \mathcal{N}(0, 1)$, we define $q_i = \exp (\mu + \sigma z_i)$.
Since $\mathbb{E}[|\cA|]$ increases monotonically with $\mu$ for a fixed $\sigma$, we can identify the desired $\mu$, which realizes $\mathbb{E}[|\cA|] \approx 150$, via binary search.
Fixing $\sigma = 0.75$, we find the appropriate $\mu$ using $5$ pilot sets of $\bm{x}_1, \ldots, \bm{x}_N$ and $z_1, \ldots, z_N$.
For each target rank $r_{\mathrm{Nys}}$, we perform $5$ independent quality-weighted Nystr{\"o}m approximations and compute the exact values of the anchor gap $\gamma$ and the normalizing term $f_{\mathrm{den}}(\bm{L}) = -\log\det(\bm{I} + \bm{L})$.
We repeat this entire procedure $20$ times for each problem size $N \in \{\num{3000}, \num{5000}, \num{10000}, \num{30000}\}$.

Figure \ref{fig:anchor_gap} illustrates the measured decay of the expected anchor gap $\mathbb{E}[\gamma | \bm{\theta}]$.
Error bars representing the standard deviations are plotted but are heavily obscured by the markers due to their small magnitude.
These results yield two key insights.
First, the theoretical decay rate $O(r^{-1})$ is somewhat conservative for this typical synthetic setting.
Second, the decay behavior is largely unaffected by the problem size $N$.
This robustness is predicted by the bound \eqref{eq:ex_gap_bound}, which depends on the sum of item qualities rather than $N$ directly, indicating that the expected subset size under the current model $\bm{L}_t$ (which is largely determined by $\bm{Q}_t$) governs the decay rate more significantly than $N$.

Figure \ref{fig:eval_gap} displays the absolute and relative biases arising from the approximation of $f_{\mathrm{den}}$ during validation and test log-likelihood evaluations.
Again, standard deviation error bars are included but remain invisible due to their negligible scale.
The empirical results demonstrate that the relative bias introduced by the rank-$512$ quality-weighted Nystr{\"o}m approximation strictly remains below $1 \%$ of the exact value across all tested $N$.
This confirms that the approximation biases for KS-DPPs in Section \ref{sec:ex} are safely negligible, validating our empirical comparisons.

\begingroup
\renewcommand{\arraystretch}{1.3}
\begin{table}[!ht]
\setlength\tabcolsep{0pt}
\centering
\caption{Comparison of the test mean log-likelihoods with the exact MM method on \ttM. The values denote $\mbox{med}^{\mbox{max}}_{\mbox{min}}$ of 3 trials. }
\label{tab:comparison_mm_full}
\begin{threeparttable}
\begin{tabular}{@{\extracolsep{2ex}}*{6}{lccccc}}
\toprule
\multicolumn{1}{c}{\textbf{Method}} & \multicolumn{5}{c}{\textbf{N}} \\
\multicolumn{1}{c}{} & \textbf{200} & \textbf{500} & \textbf{1000} & \textbf{2000} & \textbf{3000} \\
\midrule
\textbf{MM} & \ensuremath{-98.06^{\scriptscriptstyle -95.57}_{\scriptscriptstyle -98.37}} & \ensuremath{-191.33^{\scriptscriptstyle -185.40}_{\scriptscriptstyle -192.71}} & \ensuremath{-297.04^{\scriptscriptstyle -285.09}_{\scriptscriptstyle -297.94}} & \ensuremath{-420.40^{\scriptscriptstyle -401.43}_{\scriptscriptstyle -421.32}} & \ensuremath{-490.63^{\scriptscriptstyle -467.06}_{\scriptscriptstyle -491.55}} \\
\midrule
\textbf{LR+D r=4} & \ensuremath{-178.58^{\scriptscriptstyle -171.02}_{\scriptscriptstyle -179.68}} & \ensuremath{-311.40^{\scriptscriptstyle -296.57}_{\scriptscriptstyle -314.58}} & \ensuremath{-442.03^{\scriptscriptstyle -417.56}_{\scriptscriptstyle -444.55}} & \ensuremath{-575.97^{\scriptscriptstyle -542.57}_{\scriptscriptstyle -578.60}} & \ensuremath{-646.88^{\scriptscriptstyle -608.39}_{\scriptscriptstyle -649.53}} \\
\textbf{LR+D r=8} & \ensuremath{-169.19^{\scriptscriptstyle -161.78}_{\scriptscriptstyle -170.39}} & \ensuremath{-301.91^{\scriptscriptstyle -287.26}_{\scriptscriptstyle -305.11}} & \ensuremath{-432.51^{\scriptscriptstyle -408.20}_{\scriptscriptstyle -435.13}} & \ensuremath{-566.57^{\scriptscriptstyle -533.25}_{\scriptscriptstyle -569.30}} & \ensuremath{-637.41^{\scriptscriptstyle -599.12}_{\scriptscriptstyle -640.18}} \\
\textbf{LR+D r=16} & \ensuremath{-154.29^{\scriptscriptstyle -147.25}_{\scriptscriptstyle -155.49}} & \ensuremath{-286.06^{\scriptscriptstyle -271.78}_{\scriptscriptstyle -289.27}} & \ensuremath{-416.61^{\scriptscriptstyle -392.61}_{\scriptscriptstyle -419.19}} & \ensuremath{-550.56^{\scriptscriptstyle -517.61}_{\scriptscriptstyle -553.24}} & \ensuremath{-621.30^{\scriptscriptstyle -583.51}_{\scriptscriptstyle -624.31}} \\
\textbf{LR+D r=32} & \ensuremath{-134.95^{\scriptscriptstyle -128.61}_{\scriptscriptstyle -135.55}} & \ensuremath{-264.13^{\scriptscriptstyle -250.62}_{\scriptscriptstyle -266.96}} & \ensuremath{-394.02^{\scriptscriptstyle -370.97}_{\scriptscriptstyle -396.16}} & \ensuremath{-528.06^{\scriptscriptstyle -495.67}_{\scriptscriptstyle -530.21}} & \ensuremath{-598.72^{\scriptscriptstyle -561.71}_{\scriptscriptstyle -601.60}} \\
\textbf{LR+D r=64} & \ensuremath{-114.71^{\scriptscriptstyle -110.13}_{\scriptscriptstyle -115.78}} & \ensuremath{-238.07^{\scriptscriptstyle -226.17}_{\scriptscriptstyle -240.90}} & \ensuremath{-366.83^{\scriptscriptstyle -345.24}_{\scriptscriptstyle -368.67}} & \ensuremath{-500.74^{\scriptscriptstyle -469.98}_{\scriptscriptstyle -502.61}} & \ensuremath{-571.62^{\scriptscriptstyle -536.24}_{\scriptscriptstyle -573.30}} \\
\textbf{LR+D r=128} & \ensuremath{-101.11^{\scriptscriptstyle -98.28}_{\scriptscriptstyle -101.71}} & \ensuremath{-213.51^{\scriptscriptstyle -203.73}_{\scriptscriptstyle -215.51}} & \ensuremath{-338.59^{\scriptscriptstyle -319.59}_{\scriptscriptstyle -339.90}} & \ensuremath{-472.71^{\scriptscriptstyle -444.28}_{\scriptscriptstyle -473.27}} & \ensuremath{-543.15^{\scriptscriptstyle -510.28}_{\scriptscriptstyle -544.60}} \\
\textbf{LR+D r=256} &  & \ensuremath{-196.79^{\scriptscriptstyle -190.03}_{\scriptscriptstyle -198.86}} & \ensuremath{-315.23^{\scriptscriptstyle -299.58}_{\scriptscriptstyle -315.42}} & \ensuremath{-448.12^{\scriptscriptstyle -422.73}_{\scriptscriptstyle -448.83}} & \ensuremath{-518.27^{\scriptscriptstyle -489.55}_{\scriptscriptstyle -519.89}} \\
\textbf{LR+D r=512} &  &  & \ensuremath{-301.76^{\scriptscriptstyle -289.46}_{\scriptscriptstyle -303.07}} & \ensuremath{-431.16^{\scriptscriptstyle -410.47}_{\scriptscriptstyle -433.43}} & \ensuremath{-505.01^{\scriptscriptstyle -477.51}_{\scriptscriptstyle -507.63}} \\
\textbf{LR+D r=1024} &  &  &  & \ensuremath{-425.11^{\scriptscriptstyle -406.19}_{\scriptscriptstyle -426.03}} & \ensuremath{-497.28^{\scriptscriptstyle -474.39}_{\scriptscriptstyle -499.02}} \\
\textbf{LR+D r=N} & \ensuremath{-98.24^{\scriptscriptstyle -95.97}_{\scriptscriptstyle -98.54}} & \ensuremath{-192.29^{\scriptscriptstyle -186.56}_{\scriptscriptstyle -193.32}} & \ensuremath{-299.63^{\scriptscriptstyle -287.62}_{\scriptscriptstyle -301.14}} &  &  \\
\midrule
\textbf{KS-DPP d=4} & \ensuremath{-101.12^{\scriptscriptstyle -96.81}_{\scriptscriptstyle -101.57}} & \ensuremath{-197.80^{\scriptscriptstyle -190.82}_{\scriptscriptstyle -205.27}} & \ensuremath{-318.22^{\scriptscriptstyle -298.38}_{\scriptscriptstyle -327.08}} & \ensuremath{-445.94^{\scriptscriptstyle -443.08}_{\scriptscriptstyle -465.16}} & \ensuremath{-517.33^{\scriptscriptstyle -488.02}_{\scriptscriptstyle -544.71}} \\
\textbf{KS-DPP d=8} & \ensuremath{-99.18^{\scriptscriptstyle -96.88}_{\scriptscriptstyle -101.11}} & \ensuremath{-194.46^{\scriptscriptstyle -187.16}_{\scriptscriptstyle -194.92}} & \ensuremath{-300.89^{\scriptscriptstyle -287.87}_{\scriptscriptstyle -301.84}} & \ensuremath{-425.11^{\scriptscriptstyle -404.74}_{\scriptscriptstyle -425.25}} & \ensuremath{-498.78^{\scriptscriptstyle -469.65}_{\scriptscriptstyle -499.13}} \\
\textbf{KS-DPP d=16} & \ensuremath{-99.25^{\scriptscriptstyle -96.16}_{\scriptscriptstyle -99.56}} & \ensuremath{-193.19^{\scriptscriptstyle -186.49}_{\scriptscriptstyle -194.74}} & \ensuremath{-301.04^{\scriptscriptstyle -286.45}_{\scriptscriptstyle -301.95}} & \ensuremath{-423.70^{\scriptscriptstyle -403.78}_{\scriptscriptstyle -424.51}} & \ensuremath{-493.74^{\scriptscriptstyle -468.87}_{\scriptscriptstyle -495.69}} \\
\textbf{KS-DPP d=32} & \ensuremath{-99.12^{\scriptscriptstyle -96.17}_{\scriptscriptstyle -99.40}} & \ensuremath{-193.79^{\scriptscriptstyle -186.30}_{\scriptscriptstyle -194.33}} & \ensuremath{-299.17^{\scriptscriptstyle -286.61}_{\scriptscriptstyle -300.81}} & \ensuremath{-423.00^{\scriptscriptstyle -402.65}_{\scriptscriptstyle -424.42}} & \ensuremath{-493.22^{\scriptscriptstyle -468.52}_{\scriptscriptstyle -494.52}} \\
\bottomrule
\end{tabular}
\end{threeparttable}
\end{table}

\endgroup

\subsection{Full Result of Comparison with Exact MM}
Table \ref{tab:comparison_mm_full} provides the comprehensive results for the comparison between LR+D and KS-DPP against the exact MM method on the \ttM dataset.
This table extends the principal findings summarized in Section \ref{sec:ex} (Table \ref{tab:comparison_mm}) by detailing the test mean log-likelihoods across the full sweep of hyperparameter combinations for both $r$ and $d$.
Consistent with the discussion in the main text, KS-DPP outperforms the full-rank LR+D ($r=N$) at larger problem sizes, highlighting its superior suitability for scalable stochastic optimization.

\begin{figure*}
    \centering
    \begin{minipage}{0.51\textwidth}
        \includegraphics[width=\columnwidth]{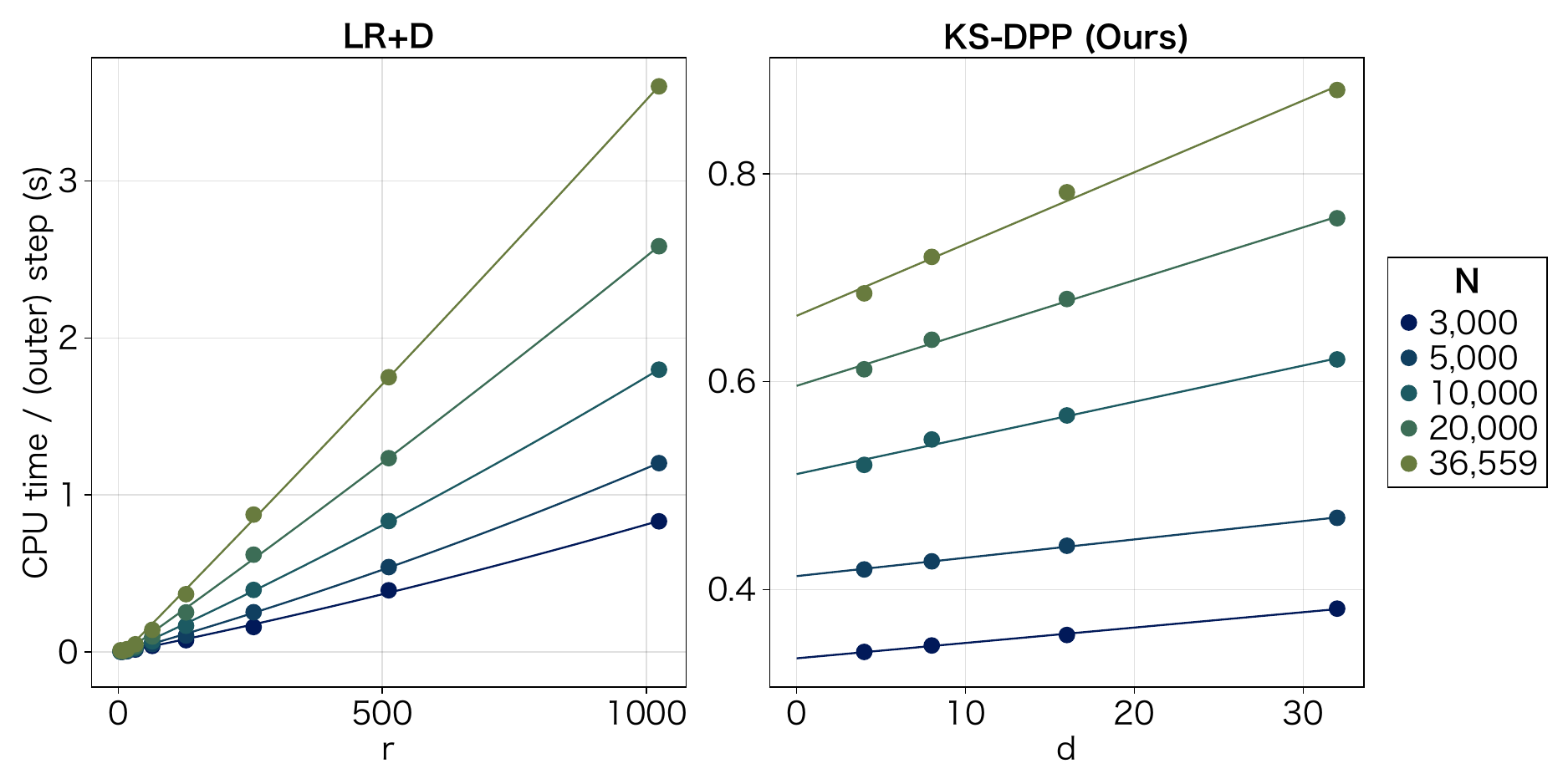}
        \caption{$r$ or $d$ vs. per-(outer-)step computational costs on \ttI. Each marker shows the mean of $3$ trials and the lines are quadratic (left) or linear (right) fits.} 
        \label{fig:cost_r_d_instacart}
    \end{minipage}
    \hspace{0.03\textwidth}
    \begin{minipage}{0.43\textwidth}
        \includegraphics[width=\columnwidth]{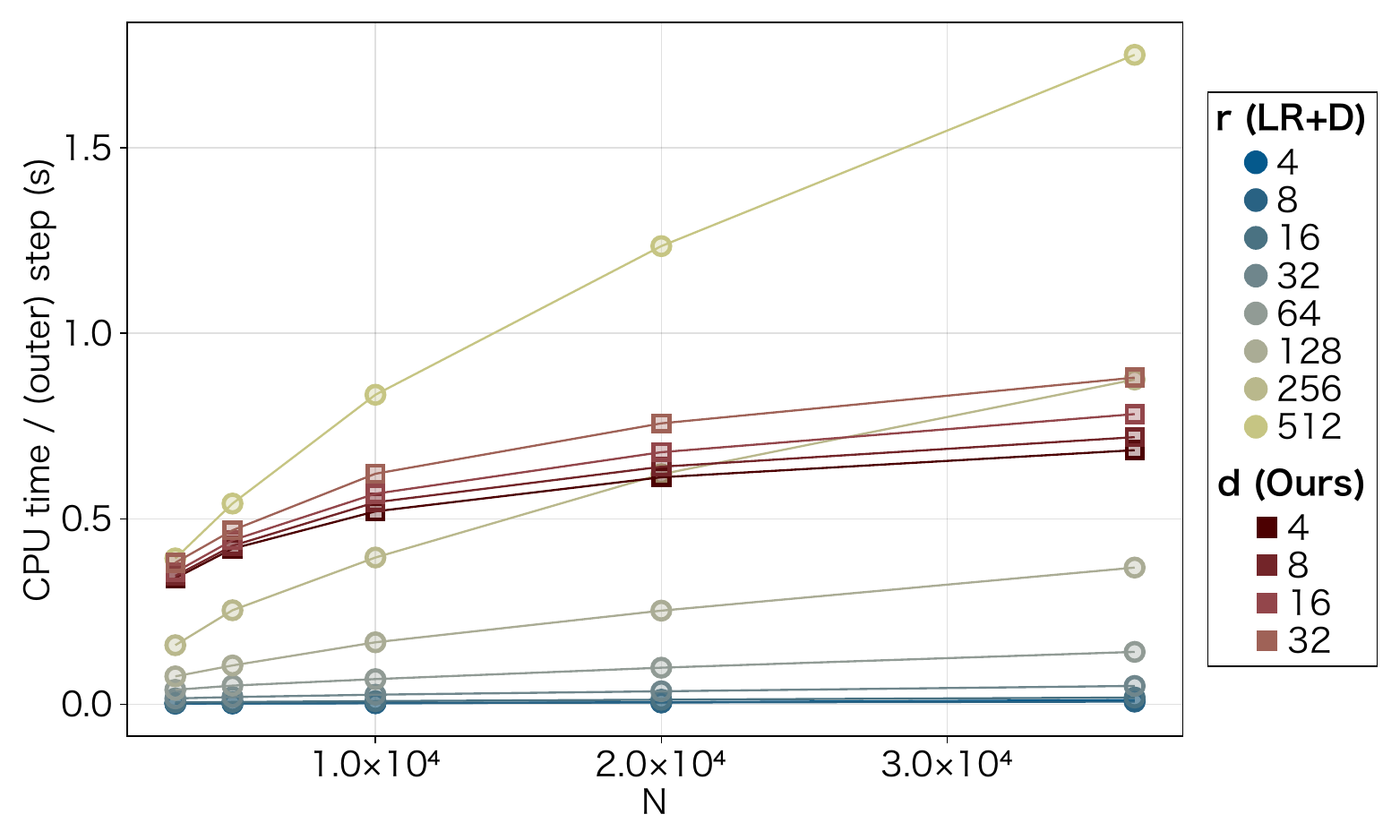}
        \caption{$N$ vs. per-(outer-)step computational costs on \ttI. Each marker shows the mean of $3$ trials.}
        \label{fig:cost_N_instacart}
    \end{minipage}
\end{figure*}

\subsection{Measured Computational Cost for Instacart}\label{app:measurement}
We conducted the same measurements as in Section \ref{sec:measurement} using the \ttI dataset.
The fitting functions used for $r$ and $d$ are identical to those in the main text (i.e., quadratic for LR+D and linear for KS-DPP).
The results are summarized in Figure \ref{fig:cost_r_d_instacart} and Figure \ref{fig:cost_N_instacart}.

As shown in Figure \ref{fig:cost_r_d_instacart} (right), the linearity of the outer step cost with respect to $d$ remains clearly visible for KS-DPP.
On the other hand, for LR+D (left), the CPU time exhibited a nearly linear behavior with respect to $r$ within the evaluated range, despite the quadratic fitting model.
This trend is likely because the experiments were conducted on a workstation equipped with 36 threads.
The optimized parallel matrix calculations in the system probably reduced the effect of the theoretical $O(r^2)$ complexity, so the quadratic curve did not show up clearly.

Regarding the dependence on $N$, similar to the observations on the \ttM dataset, a clear linear increase in cost is not prominently observed within the evaluated range of $N$ in Figure \ref{fig:cost_N_instacart}.
Instead, the computational cost exhibits a flatter and milder growth as $N$ increases, rather than escalating strictly linearly.

\end{document}